\documentclass{article}

\usepackage{url}
\usepackage{enumitem}
\usepackage{microtype}
\usepackage{graphicx}
\usepackage{subcaption}
\usepackage{booktabs}
\usepackage{hyperref}

\usepackage[accepted]{icml2026}
\usepackage{amsmath}
\usepackage{amssymb}
\usepackage{mathtools}
\usepackage{amsthm}
\usepackage[capitalize,noabbrev]{cleveref}
\theoremstyle{plain}
\newtheorem{theorem}{Theorem}[section]
\newtheorem{proposition}[theorem]{Proposition}

\newtheorem{corollary}[theorem]{Corollary}
\theoremstyle{definition}

\theoremstyle{remark}
\newtheorem{remark}[theorem]{Remark}
\usepackage[textsize=tiny]{todonotes}
\icmltitlerunning{Improving Backward Conformal Prediction via Non-Conformity Score Transformation}

\begin{document}

\twocolumn[
  \icmltitle{Improving Backward Conformal Prediction via Non-Conformity Score Transformation}
\icmlsetsymbol{equal}{*}
\begin{icmlauthorlist}
\icmlauthor{Junxian Liu}{sustech,cqu}
\icmlauthor{Hao Zeng}{sustech}
\icmlauthor{Hongxin Wei}{sustech}
\end{icmlauthorlist}
\icmlaffiliation{sustech}{Department of Statistics and Data Science, Southern University of Science and Technology}
\icmlaffiliation{cqu}{College of Mathematics and Statistics, Chongqing University}
\icmlcorrespondingauthor{Hongxin Wei}{weihx@sustech.edu.cn}
\icmlkeywords{Machine Learning, ICML}
\vskip 0.3in
]

\printAffiliationsAndNotice{}  

\begin{abstract}
Conformal Prediction (CP) provides a statistical framework for uncertainty quantification that constructs prediction sets with coverage guarantees. 
While CP yields uncontrolled prediction set sizes, Backward Conformal Prediction (BCP) inverts this paradigm by enforcing a predefined upper bound on set size and estimating the resulting coverage guarantee. 
However, the looseness induced by Markov's inequality within the BCP framework causes a significant gap between the estimated coverage bound and the empirical coverage. 
In this work, we introduce ST-BCP, a novel method that introduces a data-dependent transformation of nonconformity scores to narrow the coverage gap. 
In particular, we develop a computable transformation and prove that it outperforms the baseline identity transformation.
Extensive experiments demonstrate the effectiveness of our method, reducing the  average coverage gap from 4.20\% to 1.12\% on common benchmarks.
\end{abstract}

\section{Introduction}
Machine learning models are increasingly deployed in high-risk domains such as medical diagnosis \cite{caruana2015intelligible}, and autonomous driving \cite{bojarski2016end}. In these settings, incorrect predictions can lead to severe consequences, underscoring the critical need for reliable uncertainty quantification. 
Conformal Prediction (CP) \citep{vovk2005algorithmic, angelopoulos2024theoretical} provides a distribution-free framework that converts point predictions into prediction sets containing the true label with a user-specified probability. 
This framework is particularly well-suited for modern deep learning models, as it ensures rigorous statistical coverage without requiring model retraining or distributional assumptions \cite{fontana2023conformal}.

Despite its theoretical advantage, Conformal Prediction often yields excessively large prediction sets \cite{xu2025selective, shi2024conformal}, thereby diminishing its practical utility for tasks that require informative, concise outputs. 
To address this, Backward Conformal Prediction (BCP)  \cite{gauthier2025backward} inverts this paradigm by pre-defining an upper bound on prediction set size and estimating the corresponding coverage bound. 
However, the looseness introduced by applying Markov's inequality to e-variables within the BCP framework yields excessively conservative coverage guarantees, resulting in a significant gap between empirical coverage and the estimated coverage bound. 
This motivates us to tighten the coverage bound by mitigating the looseness of Markov's inequality.

In this work, we propose ST-BCP, a novel method that utilizes a data-dependent transformation of non-conformity scores to narrow the coverage gap. 
Our key idea is to reshape the distribution of non-conformity scores to mitigate the looseness of Markov's inequality, thereby tightening the coverage bound. 
A central challenge in such data-dependent transformations is preserving exchangeability, the mathematical foundation of CP validity. 
We resolve this by employing a symmetric parameterization strategy: defining transformation parameters by combining the leave-one-out calibration set with a pseudo-test point. 

Furthermore, we ensure that the coverage bound can still be consistently estimated under mild regularity assumptions.
Theoretically, we derive the optimal transformation under a monotonicity constraint.  Since this theoretical optimum relies on unknown conditional distributions, we provide a computable approximation that closely mimics the optimal behavior.
Notably, we provide a formal proof demonstrating that this approximation yields superior performance relative to the baseline identity transformation.

ST-BCP delivers a significantly tighter lower bound on coverage compared to BCP, thereby reducing the conservatism inherent in coverage estimation.
This improvement is practically valuable in downstream applications, as excessively conservative coverage bounds may prompt unwarranted human intervention.
Consider a medical diagnostic system where routine cases are automated if the estimated coverage lower bound exceeds a safety threshold; otherwise, it conservatively defers to a doctor. 
With standard BCP, the estimated coverage lower bound may fall severely below the threshold even if the true coverage is safe. 
This over-conservatism forces safe cases into manual review. By applying ST-BCP, the estimated coverage lower bound successfully exceeds the required threshold, accurately reflecting the system's true reliability and significantly reducing unnecessary human intervention.

Extensive experiments across CIFAR-10, CIFAR-100, and Tiny-ImageNet demonstrate the superiority of our proposed score transformation over the original BCP framework. First, our method drastically narrows the coverage gap between the leave-one-out (LOO) estimator and empirical coverage. For instance, using a ResNet-50 model on CIFAR-10 with the size constraint rule of $\mathcal{T}=2$, our transformation reduces the coverage gap (GAP) from 5.38\% to 0.72\%—a nearly seven-fold improvement over the baseline identity transformation. Beyond the constant size constraint rule, our approach also yields a significantly tighter bound under a complex feature- and calibration-set-dependent size constraint rule, reducing GAP from 3.57\% to 1.27\%. Furthermore, our method consistently achieves lower MSE and standard deviation as the calibration set size $n$ increases. This confirms that our approach improves estimation precision while maintaining asymptotic efficiency comparable to the original BCP. Code is available at \url{https://github.com/Junxian-Liu1/ST-BCP}.

We summarize our primary contributions as follows: 
\begin{enumerate} 
\item We formalize the theoretical conditions under which data-dependent transformations preserve score exchangeability and establish the criteria for the leave-one-out estimator to remain a consistent estimator of the coverage bound.

\item We analytically derive the optimal transformation under a monotonicity constraint and develop a computationally tractable approximation that enables direct implementation without requiring inaccessible distributional information.

\item Through extensive empirical validation, we demonstrate that our proposed method significantly narrows the gap between empirical coverage and the estimated coverage bound, while maintaining a convergence rate comparable to the baseline identity transformation. \end{enumerate}

\section{Preliminaries}

In this work, we focus on classification tasks. Let $\mathcal{X}$ be the feature space and $\mathcal{Y}$ be the label space with cardinality $|\mathcal{Y}|$. Given a pre-trained model $f$, let $D_{n+1} = \{(X_i, Y_i)\}_{i=1}^n$ denote the calibration set. We further let $(X_{n+1}, Y_{n+1})$ be the test point, where $X_{n+1}$ is observed at inference time, while the true label $Y_{n+1}$ remains unknown.

\textbf{Conformal Prediction}: Conformal Prediction (CP) provides a versatile and distribution-free framework for uncertainty quantification. The core objective of CP is to construct a prediction set $C^\alpha(X_{n+1})$ that ensures a marginal coverage guarantee at a user-specified error level $\alpha \in (0, 1)$:
\begin{equation}
\mathbb{P}(Y_{n+1} \in C^{\alpha}(X_{n+1})) \ge 1 - \alpha
\end{equation}
To implement this framework, CP relies on a non-conformity score function $S: \mathcal{X} \times \mathcal{Y} \to \mathbb{R}^+$, which quantifies the discrepancy between an observation $(X, Y)$ and the model $f$. By using these scores calculated over the calibration set $D_{cal}$ and the test instance, CP constructs a statistical event involving the true label of the test point. The prediction set is then formed by traversing the label space and including all candidate labels $y \in \mathcal{Y}$ that satisfy this event, offering a rigorous and formalized assurance for predictive performance.

\textbf{Conformal e-Prediction}: Conformal e-Prediction \cite{vovk2025conformal} represents a specific implementation of CP that utilizes e-variables and Markov's inequality to derive coverage bounds. An e-variable is defined as a non-negative random variable with an expectation of at most 1. In this work, we employ the following construction for the e-variable at the test point \cite{balinsky2024enhancing,wang2022false,koning2023measuring}:
\begin{equation}
E^{n+1}(Y_{n+1}) = \frac{(n+1)S(X_{n+1}, Y_{n+1})}{\sum_{i=1}^{n} S_i + S(X_{n+1}, Y_{n+1})}
\end{equation}
Under the condition that the scores $S_1, \dots, S_{n+1}$ are exchangeable, it can be shown that $\mathbb{E}[E^{n+1}(Y_{n+1})] = 1$. By applying Markov's inequality, we obtain the statistical event $E^{n+1}(Y_{n+1}) < 1/\alpha$ with a guaranteed probabilistic lower bound:
\begin{equation}
\mathbb{P}\left(E^{n+1}(Y_{n+1}) < \frac{1}{\alpha}\right) \ge 1 - \alpha \mathbb{E}[E^{n+1}(Y_{n+1})] = 1 - \alpha
\end{equation}
Accordingly, the prediction set is defined as
\begin{equation}
C^{\alpha}(X_{n+1}) = \{y \in \mathcal{Y} : E^{n+1}(y) < \frac{1}{\alpha}\}
\end{equation}
This framework provides the necessary mathematical foundation for Backward Conformal Prediction (BCP), as it allows for adaptive (data-dependent) miscoverage level while strictly maintaining post-hoc validity \cite{gauthier2025values}. This flexibility is pivotal for inverting the standard conformal paradigm to enforce strict constraints on prediction set size.

\textbf{Backward Conformal Prediction:} Backward Conformal Prediction (BCP) \cite{gauthier2025backward} inverts the standard CP paradigm by pre-defining an upper bound on the prediction set size to ensure practical utility. The size constraint is represented by a function:
\begin{equation}
\mathcal{T}: \mathcal{D} \times \mathcal{X} \to \{1, \dots, |\mathcal{Y}|\}
\end{equation}
BCP requires the event $|C^{\alpha}(X_{n+1})| \le \mathcal{T}_{n+1}$ to hold, where $\mathcal{T}(D_{n+1},X_{n+1})=\mathcal{T}_{n+1}$. This is achieved by introducing a data-dependent miscoverage level $\tilde{\alpha}_{n+1}$, defined as the smallest $\alpha$ that satisfies the size constraint. Since the prediction sets are constructed as open sets, the existence of $\tilde{\alpha}_{n+1}$ is guaranteed.
\begin{equation}
\label{BCP_de_alpha}
\tilde{\alpha}_{n+1} = \inf\{\alpha \in (0,1) : |C^\alpha(X_{n+1})| \leq \mathcal{T}_{n+1}\}
\end{equation}
Based on the post-hoc validity of Conformal e-Prediction and a first-order Taylor approximation \cite{gauthier2025values}, BCP provides the following coverage guarantee:
\begin{equation}
\label{BCP_cov_g}
\mathbb{P}(Y_{n+1} \in C^{\tilde{\alpha}_{n+1}}(X_{n+1})) \ge 1 - \mathbb{E}[\tilde{\alpha}_{n+1}]
\end{equation}
To implement this guarantee in practice, the expectation $\mathbb{E}[\tilde{\alpha}_{n+1}]$ is estimated via the leave-one-out (LOO) estimator $\hat{\alpha}^{LOO} = \frac{1}{n}\sum_{i=1}^n \tilde{\alpha}_i$. Here, each pseudo miscoverage level $\tilde{\alpha}_i$ is calculated symmetrically to Eq.~\eqref{BCP_de_alpha} by treating the $i$-th calibration point as a pseudo-test point. Under mild regularity assumptions, $\hat{\alpha}^{LOO}$ is a consistent estimator for the theoretical expectation:
\begin{equation}
|\hat{\alpha}^{LOO} - \mathbb{E}[\tilde{\alpha}_{n+1}]| = O_P\left(\frac{1}{\sqrt{n}}\right)
\end{equation}
A critical limitation of BCP is the significant gap between the estimated coverage bound and the empirical coverage. This coverage gap primarily stems from a fundamental mismatch between the inherent distribution of non-conformity scores and the statistical tool utilized to provide the coverage guarantee. Specifically, the coverage bound in Eq.~\eqref{BCP_cov_g} is derived via Markov's inequality, which only utilizes the expectation (i.e., first-order information) of the miscoverage level and remains insensitive to the specific shape or tail behavior of the score distribution. Consequently, the original scores often fail to effectively fit this conservative mathematical constraint, leading to an excessively loose bound. This mismatch motivates our proposed method, which introduces a score transformation to reshape the score distribution.

\section{Method}
In the previous analysis, the excessive coverage gap was due to the excessive looseness caused by the inequality; in other words, it was the mismatch between the non-conformity score distribution and Markov's inequality. Our key idea is to find a suitable data-dependent transformation to reshape the distribution to adapt to Markov's inequality while maintaining the validity of BCP. 

Since the data-dependent transformations introduce additional theoretical complexity, we first formalize the ST-BCP formulation under a broad class of general transformations. In order to find the optimal candidate, we directly cast the minimization of the coverage gap as a functional optimization problem. This approach yields a transformation with superior performance over the identity transformation (original BCP), effectively tightening the bound loosened by Markov's inequality.
\subsection{The General Formulation of ST-BCP}
\textbf{Score Transformation.} To minimize the coverage gap, we introduce a non-negative, data-dependent transformation $h(s; D, X)$. Here, $s$ represents the non-conformity score, while $D$ and $X$ serve as parameters incorporating dataset and feature information. This design is intrinsically coupled with the size constraint rule $\mathcal{T}(D, X)$, thereby enabling the transformation to exploit the  information of $\mathcal{T}$.

\textbf{Symmetric Parameterization.} 
To preserve exchangeability—the mathematical foundation of CP validity—we employ a symmetric parameterization strategy that incorporates candidate labels to handle the unknown true label $Y_{n+1}$. For a candidate label $y \in \mathcal{Y}$, the transformed scores are constructed as follows:
\begin{itemize}
    \item \textbf{Calibration Scores:} $h_i^y = h(S_i; D_i^y, X_i)$, where $D_i^y =  D_i\cup \{(X_{n+1}, y)\} \quad  D_i=D_{n+1}\setminus\{(X_i, Y_i)\}$
    \item \textbf{Test Score:} $h_{n+1}(y) = h(S(X_{n+1}, y); D_{n+1}, X_{n+1})$
\end{itemize}
Under this construction, consider the case when $y = Y_{n+1}$: the dataset parameter for each term comprises all data points excluding the instance itself, while the feature parameter corresponds to the instance's own feature. This structural uniformity ensures that the sequence $\{h_1^{Y_{n+1}}, \dots, h_{n+1}(Y_{n+1})\}$ becomes mathematically symmetric, thereby strictly preserving exchangeability.

\textbf{Prediction Set and Miscoverage Level.} Following the BCP framework, the transformed e-variable $E^{n+1}(y, h)$ is defined as:
\begin{equation}
\label{STBCP_Ey}
E^{n+1}(y, h) = \frac{(n+1)h_{n+1}(y)}{\sum_{i=1}^{n} h_i^y + h_{n+1}(y)}
\end{equation}
The prediction set $C^{\alpha}_h(X_{n+1})$ is constructed as follows:
\begin{equation}
\label{c_preset}
C^{\alpha}_h(X_{n+1}) = \{y \in \mathcal{Y} : E^{n+1}(y, h) < 1/\alpha\}
\end{equation}
Given the size constraint $\mathcal{T}_{n+1} = \mathcal{T}(D_{n+1}, X_{n+1})$, the data-dependent miscoverage level $\tilde{\alpha}_{n+1}(h)$ is the minimum $\alpha$ satisfying this size requirement:
\begin{equation}
\tilde{\alpha}_{n+1}(h) = \inf\{\alpha \in (0,1) : |C^{\alpha}_h(X_{n+1})| \leq \mathcal{T}_{n+1}\}
\end{equation}
Since $E^{n+1}(Y_{n+1},h)$ remains a valid e-variable under exchangeability, we utilize a first-order Taylor approximation to establish the coverage guarantee. The impact of score transformation on the reliability of this approximation is analyzed in Appendix~\ref{taylor}.
\begin{equation}
\mathbb{P}(Y_{n+1} \in C^{\tilde{\alpha}_{n+1}(h)}_h(X_{n+1})) \ge 1 - \mathbb{E}[\tilde{\alpha}_{n+1}(h)]
\end{equation}
\textbf{LOO Estimator.} The primary objective of the LOO estimator is to estimate $\mathbb{E}[\tilde{\alpha}_{n+1}(h)]$ using only the calibration set and test feature, thereby bypassing the unknown true label of the test point. To maintain structural symmetry with the original transformation while ensuring computability, the transformation is adapted by treating each calibration point as a pseudo-test point. For each $i \in \{1, \dots, n\}$, we define the transformed scores as follows:
\begin{itemize}
    \item \textbf{Pseudo Calibration Scores:} $h_j = h(S_j; D_j, X_j)$.
    \item \textbf{Pseudo Test Score:} $h_i(y) = h(S(X_i, y); D_i, X_i)$.
\end{itemize}
The pseudo e-variable $E^i(y, h)$ is constructed as follows:
\begin{equation}
E^i(y, h) = \frac{n h_i(y)}{\sum_{j \neq i}^n h_j + h_i(y)}
\end{equation}
Let $\mathcal{T}_i = \mathcal{T}(D_i, X_i)$ and $C^\alpha_h(X_i) = \{y \in \mathcal{Y} : E^i(y, h) < 1/\alpha\}$. The corresponding pseudo miscoverage levels $\tilde{\alpha}_i(h)$ and the LOO estimator $\hat{\alpha}^{LOO}(h)$ are defined as:
\begin{equation}
\tilde{\alpha}_i(h) = \inf\{\alpha \in (0,1) : |C^\alpha_h(X_i)| \leq \mathcal{T}_i\}
\end{equation}
\begin{equation}
\label{c_LOO}
\hat{\alpha}^{LOO}(h) = \frac{1}{n} \sum_{i=1}^n \tilde{\alpha}_i(h)
\end{equation}
The introduction of the data-dependent transformation $h(s; D, X)$ inevitably creates intricate statistical dependencies among the non-conformity scores, potentially complicating the estimation process.  However, the following theorem rigorously establishes the consistency of our estimator despite these complex couplings:
\begin{theorem}
\label{LOOestimate}
Under appropriate regularity conditions, the leave-one-out estimator satisfies:
\begin{equation}
| \hat{\alpha}^{LOO}(h) - \mathbb{E}[\tilde{\alpha}_{n+1}(h)] | = O_P\left(\frac{1}{\sqrt{n}}\right)
\end{equation}
\end{theorem}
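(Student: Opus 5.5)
We will split the error into a stability (bias) term and a concentration (variance) term,
\begin{equation*}
\hat{\alpha}^{LOO}(h) - \mathbb{E}[\tilde{\alpha}_{n+1}(h)] = \Big(\hat{\alpha}^{LOO}(h) - \mathbb{E}[\hat{\alpha}^{LOO}(h)]\Big) + \Big(\mathbb{E}[\tilde{\alpha}_1(h)] - \mathbb{E}[\tilde{\alpha}_{n+1}(h)]\Big),
\end{equation*}
where the second equality of means uses that the $\tilde{\alpha}_i(h)$ are identically distributed by the symmetric construction. The key idea is to write $\tilde{\alpha}_i(h) = G(Z_i; Z_{-i})$, where $Z_j = (X_j,Y_j)$ and $G$ is a fixed symmetric-in-its-second-argument functional, evaluated on a sample of size $n-1$. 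Then $\tilde{\alpha}_{n+1}(h)$ is essentially $G(Z_{n+1}; Z_{1:n})$, the same functional evaluated on a sample of size $n$. The difference is that it uses $D^y_i$ (augmented with the candidate test point) rather than $D_i$, and $n+1$ rather than $n$ in the e-variable.

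The regularity conditions we will impose are the following.
\begin{enumerate}
\item[(i)] The transformation is uniformly stable: replacing or removing one point of $D$ changes $h(s;D,X)$ by at most $c/n$ in sup-norm, and likewise for $\mathcal{T}(D,X)$. Since $\mathcal{T}$ is integer-valued, this is most naturally phrased as: $\mathcal{T}$ changes with probability $O(1/n)$.
\item[(ii)] $h$ is bounded above and bounded below away from zero in average, so that $\frac1n\sum_j h_j$ stays bounded away from zero.
\item[(iii)] The ordered values $\{h_i(y)\}_y$ have a bounded density near the threshold, so that $\tilde{\alpha}$ is Lipschitz in the normalizing sum.
\end{enumerate}
Under (ii), $\tilde{\alpha}_i(h) = \big(\tfrac1n\sum_{j\ne i}h_j + h_i(y^\ast)\big)/h_i(y^\ast)$ evaluated at the $(\mathcal{T}_i+1)$-th largest transformed label score $y^\ast$, truncated to $[0,1]$. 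It is therefore a Lipschitz function of the transformed scores and of the empirical mean, except on the event where $\mathcal{T}_i$ or the label ordering flips.

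For the \textbf{bias term}, we couple $\tilde{\alpha}_{n+1}(h)$ with a version built on $D_{n+1}$ alone. Changing $D_i$ to $D_i^{y}$ perturbs each $h_i^y$ by $O(1/n)$ by (i), and changing the normalizing factor $n+1$ to $n$ shifts the average by $O(1/n)$. By the Lipschitz property, together with the $O(1/n)$ probability of a flip in $\mathcal{T}$ or in the ordering, the expected difference is $O(1/n)$. It remains to compare sample sizes $n-1$ and $n$ inside $G$. This follows from the same argument applied to adding one point, giving $|\mathbb{E}\tilde{\alpha}_1 - \mathbb{E}\tilde{\alpha}_{n+1}| = O(1/n)$.

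For the \textbf{variance term}, $\hat{\alpha}^{LOO}(h)$ is a symmetric function of $Z_1,\dots,Z_n$ with values in $[0,1]$. We will apply the Efron--Stein inequality. Replacing $Z_k$ by an independent copy changes $\tilde{\alpha}_k$ by at most $1$, and changes each $\tilde{\alpha}_i$ with $i\neq k$ by $O(1/n)$ in expectation via stability (i), plus a jump of size at most $1$ on the flip event of probability $O(1/n)$. Summing, the change in $\hat{\alpha}^{LOO}$ has second moment $O(1/n^2)$, so the variance is $O(1/n)$. Chebyshev's inequality then gives the $O_P(1/\sqrt{n})$ rate. Alternatively, we can invoke the consistency result stated for BCP with the identity transformation, once the transformed scores are shown to behave like fixed scores up to $O(1/n)$ perturbations.

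The \textbf{main obstacle} is the flip events. A stability bound on $h$ does not control the discontinuous dependence of $\tilde{\alpha}$ on the ordering of labels and on the integer $\mathcal{T}$. Bounding the second-moment contribution of these flips requires care, because the $n-1$ affected terms may be correlated. We would first try to show that each flip has probability $O(1/n)$ using the density condition (iii). Then, via Cauchy--Schwarz, we would bound $\mathbb{E}\big[(\frac1n\sum_i \mathbf{1}\{\text{flip}_i\})^2\big]$ by $O(1/n)$ rather than $O(1/n^2)$. This weaker bound would still give an $O(1/\sqrt{n})$ fluctuation through a direct second-moment bound instead of Efron--Stein.
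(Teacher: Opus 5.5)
Your variance step has a genuine gap, and under your conditions (i)--(iii) it cannot be closed, because the conclusion itself can fail. Condition (i) only says that a single replacement or removal changes $\mathcal{T}$ with probability $O(1/n)$. It says nothing about how the changes in $\mathcal{T}_1,\dots,\mathcal{T}_n$ are correlated, and a single $Z_k$ can flip all of them at once. This is what happens for rules built from global statistics, such as the $EN_{\max}/EN_{\min}$ bins the paper uses for $\mathcal{T}(D,X)$. In that case the squared change of $\hat{\alpha}^{LOO}(h)$ under replacing $Z_k$ has expectation $\asymp 1/n$ rather than $1/n^2$, and Efron--Stein only gives $\mathrm{Var}=O(1)$.

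Here is a concrete counterexample.
\begin{itemize}
\item Take $h(s;D,X)=s$, which is trivially stable.
\item Let $X=(X',V)$ with $V\sim\mathrm{Unif}[0,1]$ independent of $(X',Y)$, and let the score depend on $X'$ only.
\item Set $\mathcal{T}(D,X)=1+\mathbb{I}\{\max_{(x,y)\in D}v(x)>1-1/n\}$, where $v(x)$ reads off the $V$-coordinate.
\end{itemize}
Replacing or removing one point changes $\mathcal{T}$ with probability at most $2/n$, and (ii), (iii) hold for smooth scores. Yet $\mathcal{T}_{n+1}=1+B$, and all but at most one $\mathcal{T}_i$ equal $1+B$, where $B=\mathbb{I}\{\max_{j\le n}V_j>1-1/n\}$. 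This $B$ is a coin with $\mathbb{P}(B=1)\to 1-e^{-1}$, independent of the scores and shared by every leave-one-out term. Hence $\hat{\alpha}^{LOO}(h)=a_{1+B}+o_P(1)$, where $a_t$ is the limit under the constant rule $\mathcal{T}\equiv t$. Meanwhile $\mathbb{E}[\tilde\alpha_{n+1}(h)]\to e^{-1}a_1+(1-e^{-1})a_2$, so whenever $a_1\neq a_2$ the error stays of order one.

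Your fallback does not rescue this. A per-coordinate bound of order $1/n$ cannot be fed into Efron--Stein. A ``direct'' second-moment bound would need $\mathcal{T}(D_i,X_i)$ to agree with a \emph{fixed} reference rule, and here that fails with probability bounded away from zero.

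What is missing is a concentration condition on the size rule, not just replace-one stability. Two options would work:
\begin{itemize}
\item A deterministic $\mathcal{T}^\circ$ with $\mathbb{P}(\mathcal{T}(D_i,X_i)\neq\mathcal{T}^\circ(X_i))=O(n^{-1/2})$. Markov's inequality on the average flip indicator then already gives $O_P(n^{-1/2})$; no Cauchy--Schwarz is needed.
\item An $O(n^{-2})$ bound on the probability that one replacement flips two given $\mathcal{T}_i,\mathcal{T}_{i'}$. Efron--Stein then works.
\end{itemize}
The paper sidesteps the issue by postulating (P3), a Lipschitz-type transfer from the $\tilde\alpha_i(h)$ to the e-variable vectors, which is exactly the kind of assumption that excludes such $\mathcal{T}$-driven jumps. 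It then gets the rate from assumed Hoeffding bounds (P1) and $O(n^{-1/2})$ bias bounds (P2) via its $T_1+T_2+T_3$ decomposition.

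When $\mathcal{T}$ does not depend on $D$, your route does go through and is more transparent than the paper's. $\tilde\alpha$ is the truncated reciprocal of the $(\mathcal{T}+1)$-th smallest e-value, and order statistics are $1$-Lipschitz in sup-norm, so label reorderings cause no jumps and your condition (iii) is unnecessary. Each replacement then moves $\hat{\alpha}^{LOO}(h)$ by $O(1/n)$ deterministically, and Efron--Stein gives variance $O(1/n)$.

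Two smaller slips:
\begin{itemize}
\item The closed form is $\tilde\alpha_i(h)=\frac{1}{n}\bigl(\sum_{j\neq i}h_j/h_i(y^\ast)+1\bigr)$, not $+1$ outside, with $y^\ast$ the label of the $(\mathcal{T}_i+1)$-th \emph{smallest} transformed score.
\item Sup-norm stability of $h$ fails for $\mathbb{I}_w$ whenever $\mathcal{T}$ depends on $D$, since $w$ jumps with $\mathcal{T}$.
\end{itemize}
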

The proof and detailed conditions of Theorem \ref{LOOestimate} are provided in Appendix \ref{proof_T1}. Theorem \ref{LOOestimate} serves as the theoretical bridge between the computable LOO estimator and the uncomputable coverage bound. It guarantees that despite the scores being reshaped by contextual information, the estimator maintains an $O_P(1/\sqrt{n})$ convergence rate comparable to the original BCP framework. This result confirms the practical operationalization of ST-BCP, which we can reliably employ $\hat{\alpha}^{LOO}(h)$ as the estimated coverage bound.

\subsection{Optimization of Score Transformation}
\textbf{Coverage Gap Decomposition.} 
To understand the effect of score transformation, we first decompose the overall coverage gap:
\begin{equation}
    \text{GAP}(h)=\left\lvert \hat{\alpha}^{LOO}(h)-\mathbb{P}\left(Y_{n+1}\notin C^{\tilde{\alpha}_{n+1}(h)}_h(X_{n+1})\right)\right\rvert
\end{equation}
Using the triangle inequality, we obtain
$$\begin{aligned}
&\text{GAP}(h)\le\underbrace{\left|\hat\alpha^{LOO}(h)-\mathbb E[\tilde\alpha_{n+1}(h)]\right|}_{\text{estimation error}}+
\\
&\underbrace{\left\lvert \mathbb{E}[\tilde{\alpha}_{n+1}(h)]-\mathbb{P}\left(Y_{n+1}\notin C^{\tilde{\alpha}_{n+1}(h)}_h(X_{n+1})\right) \right\rvert}_{\text{looseness and approximation error}}
\end{aligned}$$
The first term corresponds to the estimation error of the LOO estimator. Under the conditions of Theorem \ref{LOOestimate}, this term converges to zero at rate \(O_P(n^{-1/2})\). Therefore, the score transformation has limited influence on this component, and we do not optimize it explicitly. The second term combines both the first-order Taylor approximation error and the looseness introduced by Markov's inequality. When the first-order Taylor approximation is accurate, Eq. \eqref{BCP_cov_g} implies that the absolute value can be removed.

Under this regime, reducing the looseness induced by Markov's inequality becomes approximately equivalent to reducing the overall coverage gap.   Consequently, we focus on minimizing this looseness. We formulate this as a functional optimization problem where the score transformation $h$ serves as the optimization variable:
$$\underset{h}{min} : \mathbb{E}[\tilde{\alpha}_{n+1}(h)]-\mathbb{P}\left(Y_{n+1}\notin C^{\tilde{\alpha}_{n+1}(h)}_h(X_{n+1})\right)$$

\textbf{Optimization Objective Simplification.} Since the underlying distribution of $(X, Y)$ is unknown, $\mathbb{P}(Y_{n+1} \in C^{\tilde{\alpha}_{n+1}(h)}_h(X_{n+1}))$ and $\mathbb{E}[\tilde{\alpha}_{n+1}(h)]$ cannot be expressed as functionals that are easily optimized without imposing additional constraints on the transformation. By requiring $h(s; D, X)$ to be strictly monotonic in $s$, we can leverage its inverse to significantly simplify the derivation. Moreover, to ensure both $h(s; D, X) \geq 0$ and the preservation of the original label score ordering, $h(s; D, X)$ must be strictly monotonically increasing with respect to $s$. Let $\mathcal{H}$ denote the class of functions $h(s; D, X)$ that are strictly monotonically increasing in $s$.

\begin{theorem}
\label{preserve_prediction_set}
    Under appropriate conditions, $\forall h^1,h^2 \in\mathcal{H}$, we have:
\begin{align}
    C_{h^1}^{\tilde{\alpha}_{n+1}(h^1)}(X_{n+1})=C_{h^2}^{\tilde{\alpha}_{n+1}(h^2)}(X_{n+1})
\end{align}
\end{theorem}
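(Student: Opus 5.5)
The plan is to show that, for every $h\in\mathcal{H}$, the set $C_h^{\tilde\alpha_{n+1}(h)}(X_{n+1})$ coincides with one canonical set that does not involve $h$. That set consists of the $\mathcal{T}_{n+1}$ labels with the smallest original scores $S(X_{n+1},y)$, or all of $\mathcal{Y}$ if $\mathcal{T}_{n+1}\ge|\mathcal{Y}|$. Write $y_{(1)},\dots,y_{(|\mathcal{Y}|)}$ for the labels sorted by increasing $S(X_{n+1},y)$. The first step is to fix the ``appropriate conditions'':
\begin{enumerate}
\item[(i)] the scores $S(X_{n+1},y)$ have no ties across $y$;
\item[(ii)] for each $h\in\mathcal{H}$, the map $y\mapsto E^{n+1}(y,h)$ is strictly increasing along this ordering;
\item[(iii)] the size constraint is feasible, i.e.\ fewer than $\mathcal{T}_{n+1}+1$ labels satisfy $E^{n+1}(y,h)\le 1$, so that $\tilde\alpha_{n+1}(h)\in(0,1)$ is attained in the interior, with the convention $C^0_h=\mathcal{Y}$ for the degenerate case.
\end{enumerate}

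The delicate point is (ii). The numerator $h_{n+1}(y)=h(S(X_{n+1},y);D_{n+1},X_{n+1})$ is strictly increasing in $S(X_{n+1},y)$ because $h\in\mathcal{H}$. However, the denominator $\sum_i h_i^y$ also depends on $y$ through $D_i^y$. I would first check the case where $h_i^y$ does not depend on $y$ (or depends on it only weakly). In that case $E^{n+1}(y,h)=(n+1)h_{n+1}(y)/(A+h_{n+1}(y))$ with $A\ge 0$ fixed, and this is strictly increasing in $h_{n+1}(y)$, hence in $S(X_{n+1},y)$. In general I would simply state (ii) as the regularity assumption, since it is exactly what ``preserving the original label score ordering'' requires. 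This step is the main obstacle, as it is the only place where the transformation's data dependence can break the argument.

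Given (ii), each prediction set is a sublevel set in the original ordering:
\[
C^\alpha_h(X_{n+1})=\{y_{(1)},\dots,y_{(k_h(\alpha))}\}, \qquad k_h(\alpha)=\#\{y: E^{n+1}(y,h)<1/\alpha\}.
\]
The count $k_h(\alpha)$ is nonincreasing in $\alpha$ and right-continuous in the sense dictated by the strict inequality. Let $e_{(j)}=E^{n+1}(y_{(j)},h)$, so that $e_{(1)}<\dots<e_{(|\mathcal{Y}|)}$. Then $|C^\alpha_h|\le\mathcal{T}_{n+1}$ if and only if $1/\alpha\le e_{(\mathcal{T}_{n+1}+1)}$, which means $\tilde\alpha_{n+1}(h)=1/e_{(\mathcal{T}_{n+1}+1)}$; feasibility (iii) puts this value in $(0,1)$. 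At this threshold the strict inequality $E<1/\tilde\alpha$ admits exactly $y_{(1)},\dots,y_{(\mathcal{T}_{n+1})}$ and excludes $y_{(\mathcal{T}_{n+1}+1)}$. This is where openness of the sets is used. If $\mathcal{T}_{n+1}\ge|\mathcal{Y}|$, every $\alpha$ is admissible, and the convention gives $\mathcal{Y}$.

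Since the resulting set $\{y_{(1)},\dots,y_{(\min(\mathcal{T}_{n+1},|\mathcal{Y}|))}\}$ depends only on the original scores and on $\mathcal{T}_{n+1}$, applying the argument to $h^1$ and to $h^2$ yields the same set, which proves the theorem. I would close by remarking that the transformation affects only the \emph{value} $\tilde\alpha_{n+1}(h)$, and not the set itself. This is what makes the coverage probability $h$-invariant and leaves $\mathbb{E}[\tilde\alpha_{n+1}(h)]$ as the only quantity to optimize.
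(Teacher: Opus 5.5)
Your proof is correct under the conditions you impose, and it shares the paper's core mechanism: the set is a sublevel set of $S(X_{n+1},\cdot)$, and the infimal $\alpha$ pins the cut at the $(\mathcal{T}_{n+1}+1)$-th score regardless of $h$. The hypotheses and bookkeeping differ, however. The paper assumes the asymptotic condition (P4), $\frac{1}{n}\sum_i h_i^y-\frac{1}{n}\sum_i h_i\to 0$, and uses it to replace $\sum_i h_i^y$ by the $y$-free $\sum_i h_i$. It then inverts $h$ to write $C^\alpha_h(X_{n+1})$ as $\{y:S(X_{n+1},y)<h^{-1}(\cdot\,;D_{n+1},X_{n+1})\}$ and converts the size constraint into the score threshold $w(D,X)$. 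From there it solves $\tilde\alpha_{n+1}(h)=\frac{1}{n+1}(\sum_i h_i/h(w_{n+1};D_{n+1},X_{n+1})+1)$ in closed form and substitutes back to get $\{y:S(X_{n+1},y)<w_{n+1}\}$. You instead assume exact, finite-sample order preservation of $y\mapsto E^{n+1}(y,h)$ and read the answer off the e-value order statistics.

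The paper's route buys two things. It yields the closed-form miscoverage level (its Corollary, which drives the later optimization). It also handles ties automatically through $w$: with tied scores and a $y$-free denominator, your (i) and the strictness in (ii) are violated, yet the conclusion still holds.

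Your route buys rigor and a sharper view of what is actually needed. You make feasibility explicit, whereas the paper's formula exceeds $1$ whenever $\sum_i h_i\ge n\,h(w_{n+1};D_{n+1},X_{n+1})$, a case it ignores. You also show that invariance requires only preservation of the score ordering, not $\sum_i h_i^y=\sum_i h_i$. Indeed, the paper's step from the approximate equality in (P4) to an exact substitution inside a set identity is not justified as written. It really needs something like your (ii) for large $n$, e.g.\ (P4) plus a separation margin between consecutive scores. The price of your route is that (ii) is a hypothesis on the e-variable rather than on $h$.

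One small slip: $x\mapsto (n+1)x/(A+x)$ is strictly increasing only for $A>0$, not $A\ge 0$. With that fixed, specializing your $1/e_{(\mathcal{T}_{n+1}+1)}$ to $A=\sum_i h_i$ recovers the paper's Corollary.
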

\begin{remark}
\label{re_set}
If $\frac{1}{n}\sum_{i=1}^{n}{h_i}=\frac{1}{n}\sum_{i=1}^{n}{h_i^y}$ always holds true, this conclusion also holds for all $n\in \mathbb{N}^{+}$.
\end{remark}
We provide the detailed conditions and the formal proof of Theorem \ref{preserve_prediction_set} in Appendix \ref{proof_T2}. Crucially, this theorem establishes a fundamental invariance property: it guarantees that, under appropriate conditions, any transformation within the class $\mathcal{H}$ yields prediction sets identical to those obtained under the original scale. Notably, as highlighted in Remark \ref{re_set}, these conditions are automatically satisfied when the transformation is independent of the dataset parameter $D$. This invariance effectively simplifies the complex objective of narrowing the coverage gap. Consequently, we focus solely on the optimization problem: minimizing the expected miscoverage level $\mathbb{E}[\tilde{\alpha}_{n+1}(h)]$.
\begin{corollary}
\label{simplify_a}
Under the same conditions used in theorem \ref{preserve_prediction_set}, we can draw the following corollary:
\begin{align}
\label{mcl_na1}
    \tilde{\alpha}_{n+1}(h)=\frac{1}{n+1}\left( \frac{\sum_{i=1}^n{h_i}}{h(w_{n+1};D_{n+1},X_{n+1})}+1 \right)
\end{align}
\begin{align}
\label{mcl_i}
    \tilde{\alpha}_{i}(h)=\frac{1}{n}\left( \frac{\sum_{j\neq i}^n{h_j}}{h(w_i;D_{i},X_{i})}+1 \right) \quad i=1,\dots,n
\end{align}
where,
$$w(D,X)=\sup\{l>0:|\{y:S(X,y)\}|\leq \mathcal{T}(D,X)\}$$
$$w_{i}=w(D_i,X_i) \quad i=1,\cdots,n+1$$
\end{corollary}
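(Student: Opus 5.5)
The plan is to compute $\tilde{\alpha}_{n+1}(h)$ directly from its definition, using the monotonicity of $h\in\mathcal{H}$ and the ``appropriate conditions'' of Theorem~\ref{preserve_prediction_set}. Those conditions are taken in the form of Remark~\ref{re_set}: the denominator sum $\sum_{i} h_i^y$ does not depend on the candidate $y$ and equals $\sum_i h_i$. I read $w(D,X)$ as the largest score threshold $l$ such that $|\{y: S(X,y)<l\}|\le \mathcal{T}(D,X)$.

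\textbf{Step 1: rewrite the prediction set as a score threshold.} Write $\Sigma=\sum_{i=1}^n h_i$ and $g(t)=(n+1)t/(\Sigma+t)$. Under the condition above, $E^{n+1}(y,h)=g\bigl(h(S(X_{n+1},y);D_{n+1},X_{n+1})\bigr)$. Since $g$ is strictly increasing in $t\ge 0$ and $h$ is strictly increasing in $s$, the composition is strictly increasing in the score. Hence
\[
C^\alpha_h(X_{n+1})=\{y: S(X_{n+1},y)<\ell(\alpha)\},
\]
where $\ell(\alpha)$ is defined by $g\bigl(h(\ell;D_{n+1},X_{n+1})\bigr)=1/\alpha$. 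The map $\ell(\alpha)$ is decreasing in $\alpha$. This step also yields Theorem~\ref{preserve_prediction_set}, because every $h$ induces the same nested family of sublevel sets of $S$.

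\textbf{Step 2: locate the infimum.} The size constraint $|C^\alpha_h|\le\mathcal{T}_{n+1}$ holds exactly when $\ell(\alpha)\le w_{n+1}$, by the definition of $w$ as a supremum. Because $\ell$ is decreasing and continuous wherever $h$ is continuous in $s$, the infimum over $\alpha$ is attained at $\ell(\alpha)=w_{n+1}$. This gives
\[
\frac{1}{\tilde\alpha_{n+1}}=\frac{(n+1)h(w_{n+1};D_{n+1},X_{n+1})}{\Sigma+h(w_{n+1};D_{n+1},X_{n+1})},
\]
which inverts to \eqref{mcl_na1}.

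\textbf{Step 3: the leave-one-out levels.} Repeat Steps 1 and 2 verbatim for each pseudo-test point $i$, with $n$ in place of $n+1$, $\sum_{j\ne i}h_j$ in place of $\Sigma$, and $(D_i,X_i,\mathcal{T}_i)$ in place of the test quantities. The pseudo-calibration scores $h_j$ do not depend on $y$, so no extra condition is needed, and \eqref{mcl_i} follows.

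\textbf{Main obstacle.} The delicate step is the boundary behaviour in Step 2. Since the sets are open sublevel sets and $S(X,\cdot)$ takes finitely many values, $w$ coincides with one of the label scores. I need to check that the infimum over $\alpha$ equals the boundary value rather than being attained strictly inside, and that $\tilde\alpha<1$, so that the formula is not cut off at $1$. Both should follow from openness together with strict monotonicity, possibly after assuming $h$ is continuous at $w$. A secondary point is justifying, for data-dependent $h$, that $\sum_i h_i^y=\sum_i h_i$; I will simply invoke it as the stated condition.
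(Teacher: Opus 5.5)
Your proposal is correct and follows essentially the same route as the paper's proof. Like the paper, you invert the monotone $h$ to write $C_h^\alpha(X_{n+1})$ as a strict sublevel set of $S(X_{n+1},\cdot)$, convert $|C_h^\alpha(X_{n+1})|\le\mathcal{T}_{n+1}$ into a comparison with $h(w_{n+1};D_{n+1},X_{n+1})$, solve for the smallest $\alpha$, and repeat for the pseudo-test points; you take as exact the replacement of $\sum_i h_i^y$ by $\sum_i h_i$, which the paper makes only informally from (P4).

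On your main obstacle: continuity of $h$ at $w_{n+1}$ is not needed. In classification $w_{n+1}$ is the $(\mathcal{T}_{n+1}+1)$-th smallest label score, so for $\alpha>1/(n+1)$ the constraint holds iff $\sum_i h_i/((n+1)\alpha-1)\le h(w_{n+1};D_{n+1},X_{n+1})$. However, $\tilde{\alpha}_{n+1}(h)<1$ does not follow from monotonicity. It is equivalent to $\sum_{i=1}^n h_i<n\,h(w_{n+1};D_{n+1},X_{n+1})$, an extra condition the paper tacitly assumes.
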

This threshold function $w(D,X)$ effectively maps the discrete cardinality constraint onto the continuous score space, thereby converting the miscoverage level from its implicit infimum form into an analytically tractable closed-form expression. Especially in certain cases, $w(D,X)$ itself admits a simple closed-form solution, as detailed in Appendix \ref{detail_w}. Using Corollary\ref{simplify_a}, the optimization objective can be simplified to a ratio-type functional.
$$\underset{h\in \mathcal{H}}{min}:\mathbb{E}[\tilde{\alpha}_{n+1} (h)]\Leftrightarrow \underset{h\in \mathcal{H}}{min}: \mathbb{E}\left[\frac{h_i}{h(w_{n+1};D_{n+1},X_{n+1})}\right]$$
\textbf{Optimal Transformation.} While the strict monotonicity of $h(s; D, X)$ is necessary to transform the coverage gap into this specific form, it presents a theoretical challenge: under some function space topologies, the set of strictly monotonically increasing functions is typically an open set. This openness may lead to the non-existence of an optimal solution within $\mathcal{H}$. To ensure a well-defined optimizer, we consider the set of monotonically non-decreasing functions, which can be viewed as the closure of the strictly increasing set. This closed set, combined with the fact that the objective functional is bounded from below, guarantees the existence of an optimal solution. Therefore, we expand our search space to $\bar{\mathcal{H}}$, defined as the class of functions $h(s; D, X)$ that are monotonically non-decreasing with respect to $s$.
\begin{theorem}
\label{best_h}
Under appropriate conditions, we have:
$$\begin{aligned}
    h^{opt}(s;D,X)&=\frac{a\mathbb{I}(s\geq w(D,X))}{\sqrt{\mathbb{P}(S(X,Y)\geq w(D,X)|D,X)}}\\
    &=\underset{h\in\bar{\mathcal{H}}}{argmin}:\mathbb{E}\left[\frac{h_i}{h(w_{n+1};D_{n+1},X_{n+1})}\right]
\end{aligned}$$
where $a$ is an arbitrary positive constant.
\end{theorem}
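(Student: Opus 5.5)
The plan is to reduce the functional problem to a pointwise one using the closed forms of Corollary~\ref{simplify_a}, then solve the pointwise problem with Cauchy--Schwarz.

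\textbf{Step 1: separate the objective.} Write $g(D,X)=h(w(D,X);D,X)$ for the value of the transformation at the threshold. The objective is
\begin{equation*}
\mathbb{E}\left[\frac{h_i}{g(D_{n+1},X_{n+1})}\right], \qquad h_i=h(S_i;D_i,X_i).
\end{equation*}
By the symmetric parameterization and the ``appropriate conditions'', I would treat the calibration term and the test term as approximately independent copies, since in the large-$n$ regime $D_i$ and $D_{n+1}$ are interchangeable. The objective then factorizes as $\mathbb{E}[h(S;D,X)]\cdot\mathbb{E}[1/g(D,X)]$, with both expectations taken over a generic pair $(D,X)$. This decoupling is exactly what the regularity conditions must license, and it is the step I expect to be the main obstacle. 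The leave-one-out datasets are not literally independent, so it has to be justified either asymptotically or by conditioning on the shared data.

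\textbf{Step 2: solve the pointwise problem.} Fix $(D,X)$ and let $c=g(D,X)$. Since $h$ is monotone non-decreasing in $s$, we have $h(s)\ge c$ for $s\ge w$ and $h(s)\ge 0$ for $s<w$. Hence
\begin{equation*}
\mathbb{E}[h(S)\mid D,X]\ \ge\ c\,p(D,X), \qquad p(D,X)=\mathbb{P}(S(X,Y)\ge w(D,X)\mid D,X),
\end{equation*}
with equality exactly when $h=c\,\mathbb{I}(s\ge w)$. This shows the step-function shape is optimal for any choice of the scale $c(\cdot)$.

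\textbf{Step 3: choose the scale.} It remains to minimize $\mathbb{E}[c\,p]\,\mathbb{E}[1/c]$ over positive functions $c(D,X)$. By Cauchy--Schwarz,
\begin{equation*}
\mathbb{E}[c\,p]\,\mathbb{E}[1/c]\ \ge\ \big(\mathbb{E}[\sqrt{p}]\big)^2,
\end{equation*}
with equality iff $\sqrt{c\,p}\propto 1/\sqrt{c}$, that is, $c=a/\sqrt{p}$. Substituting this into Step 2 gives $h^{opt}$ as stated. The objective is invariant under multiplying $h$ by $a>0$, which explains the arbitrary constant.

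\textbf{Step 4: check admissibility.} I would verify that $h^{opt}\in\bar{\mathcal H}$, since it is non-negative and non-decreasing in $s$. I would also note that the closed forms of Corollary~\ref{simplify_a} extend to this boundary element by a limiting argument, or simply adopt that as part of the ``appropriate conditions''. Finally, I would assume $p>0$ almost surely so that $h^{opt}$ is well defined.
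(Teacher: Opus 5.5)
Your argument is correct, and its skeleton matches the paper's. The factorization in your Step 1 is exactly the paper's hypothesis (P5), which the paper simply assumes rather than derives. Your Step 2 compresses the paper's two reductions into the single bound $\mathbb{E}[h(S)\mid D,X]\ge c\,p$; the paper first truncates to $h^*=\mathbb{I}(s\ge w)\,h$ and then replaces $h^*$ above the threshold by its threshold value $c(D,X)$.

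The real difference is Step 3. The paper perturbs $c\mapsto c+\epsilon\eta$ and expands $1/(c+\epsilon\eta)$ as a power series, using the uniform lower bound $c_0$ from (P6) to control the $O(\epsilon^2)$ remainder. It then solves the first-order condition $B(c)\,p=A(c)/c^2$ and substitutes back to get the value $(\mathbb{E}[\sqrt{p}])^2$. Strictly speaking, this only exhibits a stationary point and never shows it is a minimum.

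Your Cauchy--Schwarz bound gives that value as a global lower bound, with equality iff $c\propto p^{-1/2}$. So it proves global optimality directly, needs no expansion, and needs nothing from (P6) beyond $c>0$. The variational route's only advantage is that it discovers the form $c\propto p^{-1/2}$ rather than verifying it.

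Both arguments share one implicit step. You flag it; the paper glosses over it when it merges the integrals against $\mathcal{P}(D_i,X_i)$ and $\mathcal{P}(D_{n+1},X_{n+1})$ in its first-order condition. The two expectations must be taken under the same law. Otherwise the same Cauchy--Schwarz computation gives $c\propto (p\,r)^{-1/2}$, with $r$ the density ratio of the two laws, rather than $c\propto p^{-1/2}$.
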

The detailed conditions and the proof of Theorem \ref{best_h} are provided in Appendix \ref{proof_T3}. Given that $h(s; D, X)$ is non-monotonic decreasing with respect to $s$, the structural search space is constrained such that the optimal $h$ must be a step function. For fixed $X$ and $D$, the transformation is zero below the threshold $w(D, X)$ and a positive constant thereafter. Then the optimal solution is formally derived using the variational method.

\textbf{Improvement Operator.} Although this transformation is theoretically optimal, its direct application is hindered by the requirement for unknown conditional probabilities. The derivation of this optimum inspires an interpretation of the procedure as a refinement process induced by an improvement operator. Under this view, the standard method—based on the identity transformation—serves as the starting point, which is then mapped to a more preferred transformation through the operator until a fixed point is reached. Formally, we denote $h^1 \succeq h^2$ if $h^1$ is preferred over $h^2$ under the objective defined in Theorem~\ref{best_h}.
\begin{corollary}
\label{boost_h}
Define the operator $G$ as:
$$G(h)(s;D,X)=h(w(D,X);D,X)\mathbb{I}(s\geq w(D,X))$$
Under appropriate conditions, we have $G$ is an improvement and idempotent operator:
$$G(h) \succeq h\quad \text{and}\quad G(G(h))=G(h) \quad \forall h\in \bar{\mathcal{H}}$$
\end{corollary}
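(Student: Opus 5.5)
Idempotence is a direct computation; improvement follows by comparing objectives pointwise. Throughout, ``preferred'' means a weakly smaller value of the objective $J(h)=\mathbb{E}\left[h_i/h(w_{n+1};D_{n+1},X_{n+1})\right]$ from Theorem~\ref{best_h}. By Corollary~\ref{simplify_a}, $J$ equals $\mathbb{E}[\tilde{\alpha}_{n+1}(h)]$ up to an affine reparametrization. I also use the same ``appropriate conditions'' as in Theorem~\ref{best_h}: $w(D,X)$ is well defined and finite, and we avoid $0/0$ by assuming $h(w;D,X)>0$.

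\textbf{Idempotence.} Write $g=G(h)$, so $g(s;D,X)=h(w(D,X);D,X)\,\mathbb{I}(s\ge w(D,X))$. Evaluating at $s=w(D,X)$ gives $g(w(D,X);D,X)=h(w(D,X);D,X)$, because the indicator equals one there. Hence
$$G(g)(s;D,X)=g(w;D,X)\,\mathbb{I}(s\ge w)=h(w;D,X)\,\mathbb{I}(s\ge w)=g(s;D,X).$$
We should also note that $g\in\bar{\mathcal{H}}$. It is non-negative and non-decreasing in $s$, since it is a non-negative constant times an increasing indicator. So $G$ maps $\bar{\mathcal{H}}$ into itself.

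\textbf{Improvement.} The key observation is that $G$ leaves the denominator of $J$ unchanged and can only lower the numerator.

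The denominator of $J$ involves only the test point evaluated at its threshold. Since $G(h)(w_{n+1};D_{n+1},X_{n+1})=h(w_{n+1};D_{n+1},X_{n+1})$, this factor is identical for $h$ and $G(h)$.

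For the numerator, fix a calibration index $i$ and set $w_i=w(D_i,X_i)$. There are two cases.
\begin{itemize}
\item If $S_i<w_i$, then $G(h)_i=0\le h_i$, because $h\ge 0$.
\item If $S_i\ge w_i$, then $G(h)_i=h(w_i;D_i,X_i)\le h(S_i;D_i,X_i)=h_i$, because $h$ is monotone non-decreasing in $s$.
\end{itemize}
So $G(h)_i\le h_i$ pointwise on every realization, and the denominators coincide. Taking expectations by monotonicity of the integral gives $J(G(h))\le J(h)$, which is exactly $G(h)\succeq h$.

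\textbf{Main obstacle.} The delicate point is matching data arguments between the numerator and the denominator. The objective in Theorem~\ref{best_h} is written with $h_i$, the transformed calibration score with parameters $(D_i,X_i)$, or $D_i^{y}$ in the test construction. The denominator uses $(D_{n+1},X_{n+1})$. The pointwise argument works because $G$ acts separately for each parameter pair $(D,X)$. The comparison therefore holds coordinatewise regardless of which dataset parameter appears, as long as the threshold used inside $G$ is the one for that same pair.

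I would also check that Corollary~\ref{simplify_a} still applies to $G(h)$, even though $G(h)$ is only non-decreasing and not strictly increasing. The plan is to use the extension to $\bar{\mathcal{H}}$ already adopted in Theorem~\ref{best_h}: interpret $\tilde{\alpha}$ through formula~\eqref{mcl_na1} directly on $\bar{\mathcal{H}}$, or take a limit of strictly increasing approximants. Under this reading, the comparison of $J$ values is exactly the preference relation, and the corollary follows.
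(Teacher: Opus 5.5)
Your argument is correct, and it takes a more direct route than the paper. The paper has no stand-alone proof of this corollary. It is read off from the first half of the proof of Theorem~\ref{best_h}, which works as follows:
\begin{itemize}
\item It first uses (P5) to factor the objective as $\mathbb{E}[h_i]\,\mathbb{E}[1/h(w_{n+1};D_{n+1},X_{n+1})]$.
\item It then lowers the first factor in two stages: truncation $h\mapsto \mathbb{I}(s\ge w(D,X))\,h$, followed by flattening to the constant $c(D,X)=h(w(D,X);D,X)$ on $\{s\ge w(D,X)\}$.
\item Each stage is justified by conditioning on $(D_i,X_i)$ and summing over labels weighted by $\mathbb{P}_{Y\mid X_i}$, with a check that the second factor is unchanged.
\end{itemize}
The two stages compose to exactly $G$.

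You instead compare the ratio realization by realization. The denominator is literally the same random variable for $h$ and $G(h)$, and $G(h)_i\le h_i$ by non-negativity and monotonicity. So you never need the factorization (P5), only positivity of $h(w_{n+1};D_{n+1},X_{n+1})$, which (P6) supplies.

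Your pointwise comparison also gives something stronger. Plugging into the closed forms of Corollary~\ref{simplify_a} yields $\tilde{\alpha}_{n+1}(G(h))\le\tilde{\alpha}_{n+1}(h)$ and $\hat{\alpha}^{LOO}(G(h))\le\hat{\alpha}^{LOO}(h)$ on every realization, not just in expectation. Since the closed forms are stated with $h_i$ rather than $h_i^y$, this rests on the same reading the paper adopts (under (P4) or Remark~\ref{re_set}), which you already flag.

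What the paper's route buys is continuity with the optimality proof. The product form $A(c)B(c)$ is exactly what the subsequent variational step over $c(D,X)$ needs. There, the reduction to step functions serves as a lemma toward $h^{opt}$ rather than as a self-contained improvement argument.

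Your idempotence check, and your verification that $G$ maps $\bar{\mathcal{H}}$ into itself, are exactly right. The paper leaves both implicit.
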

By applying the operator $G$ to the baseline identity transformation $h(s; D, X) = s$, where convergence is achieved in a single iteration due to idempotence, we derive:
\begin{equation}
G(s)(s;D,X)=w(D,X)\mathbb{I}(s\geq w(D,X)) \succeq s
\end{equation}
We denote this transformation as $\mathbb{I}_{w}$. Unlike the optimal solution in Theorem \ref{best_h}, $\mathbb{I}_{w}$ is computationally efficient and requires no prior knowledge of conditional probabilities. Despite its simplicity, $\mathbb{I}_{w}$ retains the core structural advantages of the theoretical optimum and significantly narrows the coverage gap, as evidenced by our experimental results. Notably, since $\mathbb{I}_{w}$ is derived through an improvement operator acting directly on the identity mapping, it is theoretically guaranteed to outperform the baseline method. Although monotonically non-decreasing, this transformation can be uniformly approximated by functions in $\mathcal{H}$ to yield arbitrarily close miscoverage levels (see Appendix \ref{a_smf}).

For the specific case where the transformation is $\mathbb{I}_w$, we summarize the ST-BCP procedure in Algorithm \ref{alg:ST-BCP}. If the prediction set size constraint rule $\mathcal{T}$ depends on the calibration set and the test point feature, then traversal of the label space is required in ST-BCP, which causes the computing time to increase linearly with the size of the label space. Although ST-BCP will bring more computing time costs, it is worthwhile in practical applications. For instance, in automated medical diagnosis, the cost brought about by manual intervention due to excessive conservatism often exceeds the cost of computing time.
\begin{algorithm}[tbh]
\caption{ST-BCP: $h(s;D,X)=\mathbb{I}_w$}
\label{alg:ST-BCP}
\begin{algorithmic}[1]
\STATE {\bfseries Input:} Calibration set $D_{cal}$, test feature $X_{n+1}$, score function $S$ and size constraint rule $\mathcal{T}$, 
\STATE {\bfseries Output:} Prediction set $C_h^{\tilde{\alpha}_{n+1}(h)}(X_{n+1})$ and estimated coverage bound $\hat{\alpha}^{LOO}(h)$
\FOR{$i=1$ {\bfseries to} $n$}
    \STATE Compute $S_i$, $\mathcal{T}_i$ , $w_i$
    \STATE Compute $h_i=w_i\mathbb{I}(S_i\geq w_i)$
\ENDFOR
\STATE Compute $\mathcal{T}_{n+1}$ and $w_{n+1}$
\FOR{$y\in\mathcal{Y}$}
    \FOR{$i=1$ {\bfseries to} $n$}
        \STATE Compute $\mathcal{T}(D_i^y,X_i)$ and $w(D_i^y,X_i)$
        \STATE Compute $h_i^y=w(D_i^y,X_i)\mathbb{I}(S_i\geq w(D_i^y,X_i))$
    \ENDFOR
    \STATE Compute $h_{n+1}(y)=w_{n+1}\mathbb{I}(S(X_{n+1},y)\geq w_{n+1})$
    \STATE Compute $E^{n+1}(y,h)$ using Eq.~\eqref{STBCP_Ey}
\ENDFOR 
\STATE Compute $\tilde{\alpha}_{n+1}(h)$ using Eq.~\eqref{mcl_na1}
\STATE Construct $C_h^{\tilde{\alpha}_{n+1}(h)}(X_{n+1})$ using Eq.~\eqref{c_preset}
\FOR{$i=1$ {\bfseries to} $n$}
    \STATE Compute $\tilde{\alpha}_{i}(h)$ using Eq.~\eqref{mcl_i}
\ENDFOR
\STATE Compute $\hat{\alpha}^{LOO}(h)$ using Eq.~\eqref{c_LOO}
\end{algorithmic}
\end{algorithm}

\textbf{Probability Theory Insight.} While our optimization process focuses solely on minimizing the expected miscoverage level $\mathbb{E}[\tilde{\alpha}(h)]$, the resulting optimal transformation $\mathbb{I}_w$ naturally yields a step-function structure. This outcome aligns with a fundamental insight from probability theory: Markov's inequality reaches its tightest state when the random variable is a two-point distribution. Although Markov's inequality was not explicitly considered during derivation, ST-BCP reshapes the score distribution into a two-point structure when $w(D,X)$ is concentrated—a necessary condition for the first-order Taylor approximation (see Appendix \ref{taylor}). Further, assuming the average term is concentrated, the e-variable is also transformed into a two-point distribution, aligning with the tightest state of Markov's inequality to yield a tighter coverage bound.

\begin{table*}[!t] 
\centering 
\caption{Performance comparison between BCP($h=s$) and our method ST-BCP($h=\mathbb{I}_w$) under different size constraint rules $\mathcal{T}$. The results are obtained using the ResNet50 model trained on the CIFAR-10 dataset with a calibration set size of $n=200$. $\downarrow$ indicates smaller values are better, and \textbf{bold} numbers represent superior results.}
\label{table:BCP_comparison} 
\renewcommand\arraystretch{1.3}
\setlength{\tabcolsep}{6mm}{ 
\begin{tabular}{lcccc} 
\toprule 
& \multicolumn{4}{c}{BCP \textbf{/} \textbf{ST-BCP (ours)}} \\
\midrule
$\mathcal{T}$ & MisCov(\%) & STD(\%) $\downarrow$ & MSE($\times 10^4$) $\downarrow$ & GAP(\%) $\downarrow$ \\ 
\midrule 
1 & 7.13 / 7.13 & 1.71 / \textbf{1.38} & 2.94 / \textbf{1.91} & 2.40 / \textbf{1.90} \\
2 & 2.26 / 2.26 & 1.28 / \textbf{0.89} & 1.63 / \textbf{0.79} & 5.38 / \textbf{0.72} \\
3 & 1.45 / 1.45 & 1.16 / \textbf{0.66} & 1.34 / \textbf{0.43} & 5.64 / \textbf{0.52} \\
$\mathcal{T}(X)$ & 4.27 / 4.27 & 1.43 / \textbf{1.13} & 2.04 / \textbf{1.28} & 4.04 / \textbf{1.17} \\
$\mathcal{T}(D,X)$ & 5.23 / 5.23 & 1.55 / \textbf{1.23} & 2.41 / \textbf{1.54} & 3.57 / \textbf{1.27} \\
\midrule 
Average & 4.07 / 4.07 & 1.43 / \textbf{1.06} & 2.07 / \textbf{1.19} & 4.21 / \textbf{1.12} \\
\bottomrule
\end{tabular}} 
\end{table*}
\begin{figure*}[!t]
    \centering
    \resizebox{17cm}{!}{
    \begin{subfigure}{0.33\textwidth}
        \centering
        \includegraphics[width=\linewidth]{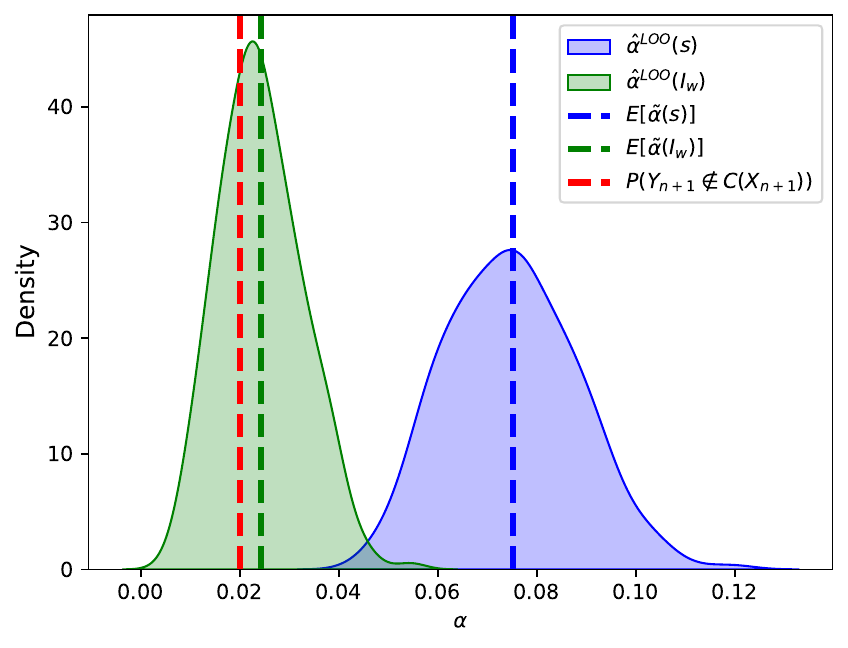}
        \subcaption{CIFAR-10 and $\mathcal{T}=2$}
        \label{fig:CIFAR10_d}
      \end{subfigure}
      \begin{subfigure}{0.339\textwidth}
        \centering
        \includegraphics[width=\linewidth]{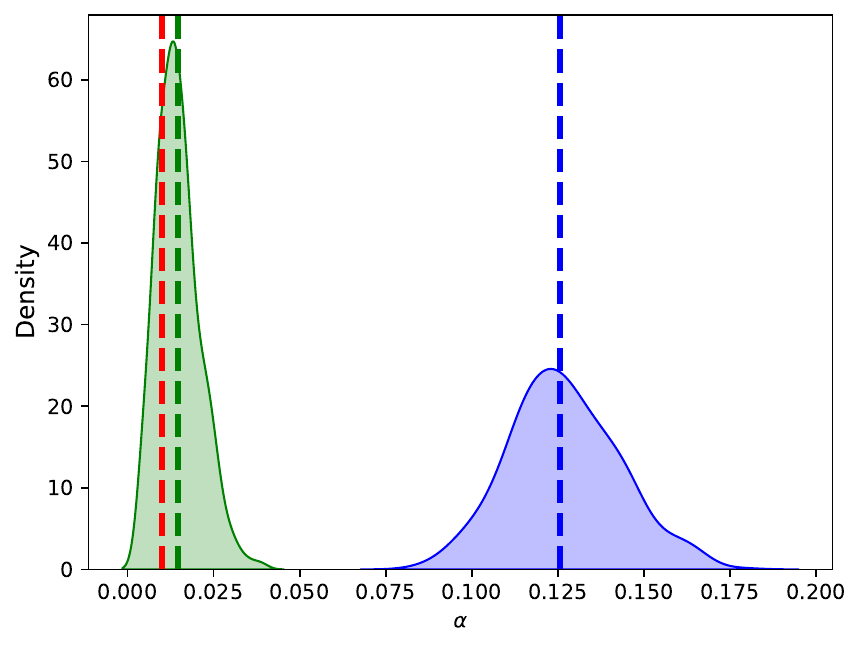}
        \subcaption{CIFAR-100 and $\mathcal{T}=20$}
        \label{fig:CIFAR100_d}
      \end{subfigure}
      \begin{subfigure}{0.33\textwidth}
        \centering
        \includegraphics[width=\linewidth]{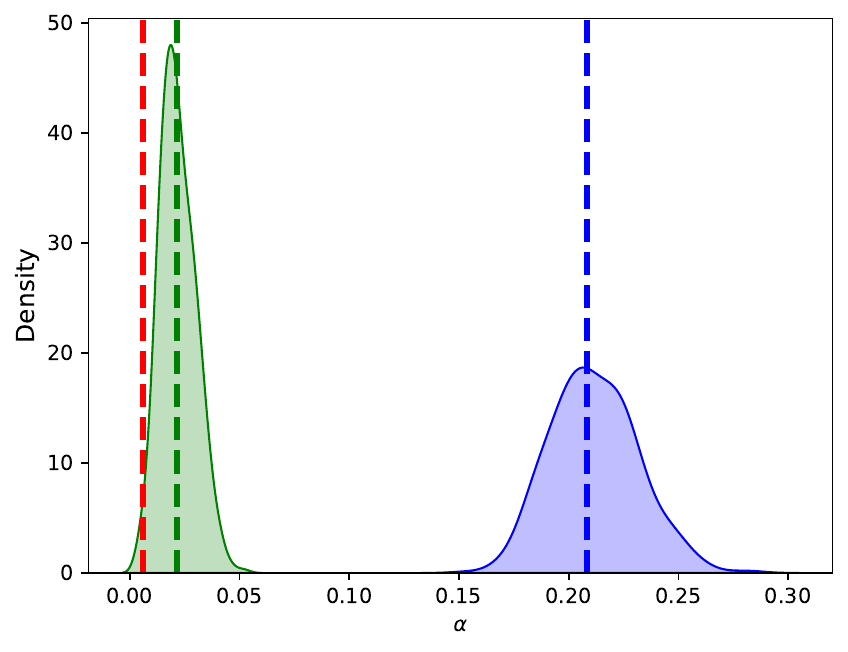}
        \subcaption{Tiny-ImageNet and $\mathcal{T}=40$}
        \label{fig:TinyImageNet_d}
      \end{subfigure}
      }
      \caption{Performance comparison between BCP ($h=s$) and our method ST-BCP ($h=\mathbb{I}_w$) under different datasets. We present  kernel density estimation (KDE) plots of the LOO estimator, the empirical miscoverage, and the empirical expectation miscoverage level. All results are obtained using a ResNet50 model under the specified size constraint rule with a calibration set size $n=200$. \textbf{Note}: Distributions that are more concentrated and closer to the empirical coverage indicate superior performance.}
    \label{fig:density}
  \end{figure*} 
\section{Experiments}
\subsection{Experimental Setup}
\textbf{Datasets and Models.}  We consider three prominent datasets in our experiments: CIFAR-10, CIFAR-100 \cite{krizhevsky2009learning}, and Tiny-ImageNet \cite{le2015tiny} which are common benchmarks. To evaluate performance across various architectures, we employ several standard image classifiers, including MobileNet \cite{howard2019searching}, ResNet \cite{he2016deep}, EfficientNet \cite{tan2019efficientnet}, and DenseNet \cite{huang2017densely}. All classifier models were trained on the full training sets utilizing standard data augmentation techniques, such as random flipping \cite{yang2022image} and Mixup \cite{zhang2017mixup}.

\textbf{Implementation Details.} We conduct all experiments using the trained models as fixed classifiers. In each experimental round, $n+1$ samples are randomly drawn without replacement from the test set; these samples are then partitioned into a calibration set of size $n$ and a single test point. To ensure statistical reliability and robustness, we aggregate the reported metrics over $M=500$ independent random trials. Throughout the experiments, we consistently employ the standard cross-entropy loss, which is also the loss function used for training, as the non-conformity score function. In addition, we have supplemented experiments under different non-conformity score functions in the Appendix.

\textbf{Size Constraint Rules.} To rigorously evaluate the adaptability of our method, we implement three distinct types of size constraint rules, denoted as $\mathcal{T}$: constant, feature-dependent, and feature- and calibration-set-dependent. For the non-constant scenarios, we quantify the difficulty of an instance using entropy and map it to the upper bound of the prediction set size. Specifically, the feature-dependent rule derives uncertainty directly from the classifier's softmax output. In contrast, the feature- and calibration-set-dependent rule estimates local uncertainty through a multi-stage process: first, we project feature representations into a lower-dimensional space using Principal Component Analysis (PCA); subsequently, we compute the entropy of the empirical label distribution formed by the $k$-nearest neighbors within the calibration set. Detailed formulations for both $\mathcal{T}(X)$ and $\mathcal{T}(D,X)$ are provided in Appendix \ref{a_scr}.

\textbf{Evaluation Metrics.} We evaluate the performance of our method by measuring the following metrics. (1) MisCov: the empirical miscoverage; (2) $\mathbb{E}[\tilde{\alpha}_{n+1}(h)]$: the empirical expectation of the miscoverage level; (3) MSE: the Mean Squared Error between the LOO estimator and $\mathbb{E}[\tilde{\alpha}_{n+1}(h)]$; (4) GAP: the coverage gap, defined as the average absolute difference between the LOO estimator and (MisCov); (5) STD: the standard deviation of the LOO estimator across independent trials. The detailed calculation method of the evaluation metrics is shown in Appendix \ref{metrics}.

\subsection{Results}
\textbf{Reduction of Coverage Gap and Prediction Set Invariance.} Table \ref{table:BCP_comparison} substantiates the effectiveness of our proposed method. Across various size constraint rules, the gap between the LOO estimator and the empirical miscoverage is significantly reduced, while the asymptotic efficiency (MSE) and estimator stability (STD) actually outperform the baseline. Notably, as highlighted in Remark \ref{re_set}, the prediction sets are only potentially subject to change when the size constraint depends on both the features and the calibration set. In this case, the empirical miscoverage remains identical before and after the transformation, even with a relatively small sample size $n$. This observation provides strong empirical support for Theorem \ref{preserve_prediction_set}, confirming that the prediction sets remain invariant under the proposed transformation.

  \begin{figure*}[!t]
    \centering
    \resizebox{14cm}{!}{
    \begin{subfigure}{0.4\textwidth}
        \centering
        \includegraphics[width=\linewidth]{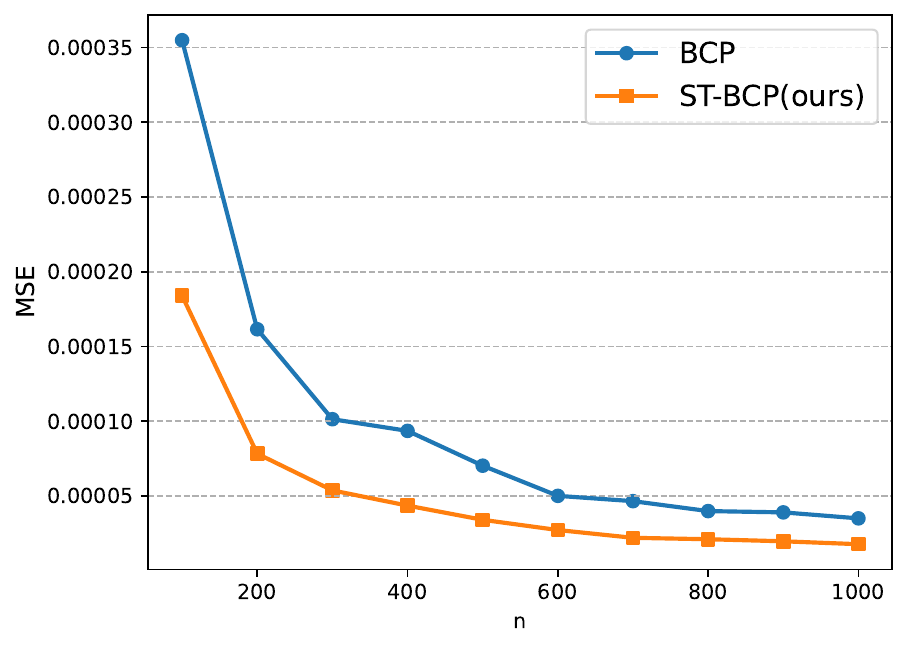}
        \subcaption{MSE}
        \label{fig:line_mse}
      \end{subfigure}
        \hspace{0.2cm}
      
        \hspace{0.2cm}
      \begin{subfigure}{0.395\textwidth}
        \centering
        \includegraphics[width=\linewidth]{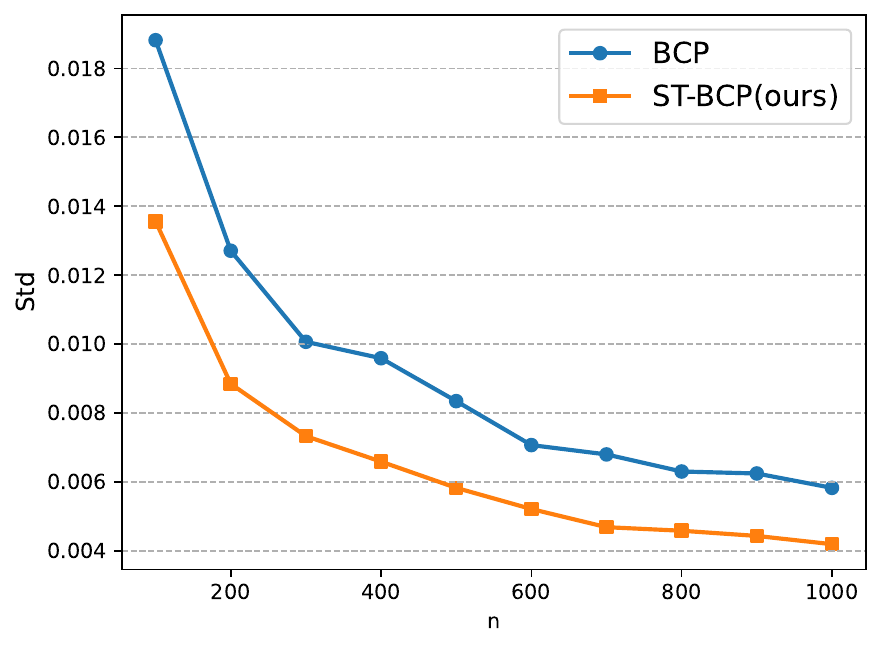}
        \subcaption{STD}
        \label{fig:line-std}
      \end{subfigure}
      }
      \caption{Comparison of LOO estimator convergence rate and stability between BCP and ST-BCP across different calibration set sizes $n$. We report the MSE and STD that are obtained using a ResNet50 model on the CIFAR-10 dataset under the size constraint rule $\mathcal{T}=2$. As $n$ increases, our method ST-BCP ($h=\mathbb{I}_w$) consistently outperforms the baseline BCP ($h=s$).}   
    \label{fig:line}
    \end{figure*} 

  \begin{figure*}[!t]
    \centering
    \resizebox{14cm}{!}{
    \begin{subfigure}{0.4\textwidth}
        \centering
        \includegraphics[width=\linewidth]{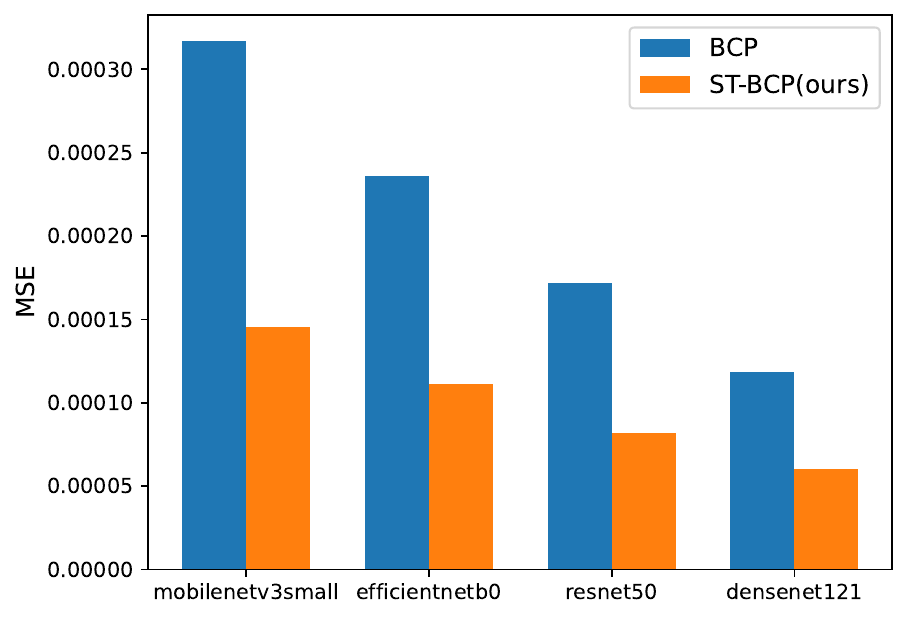}
        \subcaption{MSE}
        \label{fig:bar_mse}
      \end{subfigure}
        \hspace{0.2cm}
      
        \hspace{0.2cm}
      \begin{subfigure}{0.385\textwidth}
        \centering
        \includegraphics[width=\linewidth]{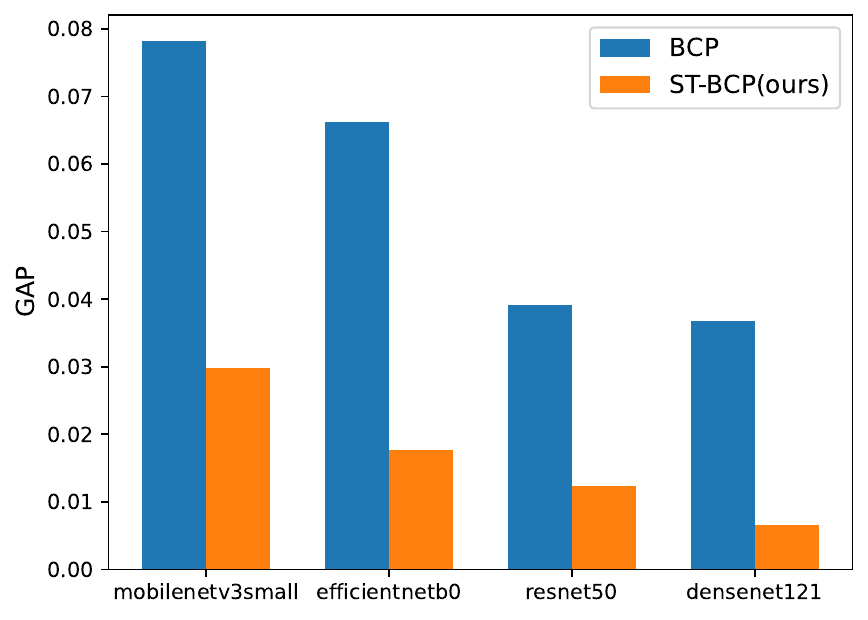}
        \subcaption{GAP}
        \label{fig:bar_std}
      \end{subfigure}
      }
      \caption{Performance comparison between BCP($h=s$) and ST-BCP($h=\mathbb{I}_w$) under different models. 
      We report the MSE and GAP that are obtained on the CIFAR-10 dataset with the calibration set size of $n=200$ under the size constraint rule $\mathcal{T}=2$
      Among these models, our method ST-BCP($h=\mathbb{I}_w$) has always outperformed the baseline BCP($h=s$).}    
       \label{fig:bar}
    \end{figure*} 

\textbf{Concentration of the LOO Estimator.} Figure \ref{fig:density} presents the kernel density estimation (KDE) of the LOO estimator, contrasting the identity transformation($h=s$) with the proposed transformation $h=\mathbb{I}_w$ across diverse datasets. Visually, the distribution under $\mathbb{I}_w$ exhibits a sharp concentration, particularly on the CIFAR-100 and Tiny-ImageNet benchmarks. This concentration signifies the statistical stability of the transformed estimator, which maintains low variance relative to the baseline. Meanwhile, the gap between the LOO estimator and empirical miscoverage is further reduced after the score transformation.

\textbf{LOO Estimator Consistency and Sample Efficiency.} While the MSE reported in Table \ref{table:BCP_comparison} is already minimal, we further evaluate the consistency of the LOO estimator with respect to $\mathbb{E}[\tilde{\alpha}_{n+1}^{pe}(h)]$. Figure \ref{fig:line} illustrates the MSE and STD of the LOO estimator for both the BCP and ST-BCP across varying calibration set sizes. As $n$ increases, both metrics exhibit a clear downward trend, with nearly identical convergence rates and magnitudes for both transformations. Notably, our proposed ST-BCP consistently outperforms the baseline, exhibiting lower MSE and STD across the entire range of $n$. Specifically, when $n=800$, the MSE of ST-BCP reaches the order of $10^{-5}$, demonstrating superior sample efficiency. This empirical evidence strongly supports the practical consistency and enhanced stability of the LOO estimator under the proposed transformation.

\textbf{ST-BCP Works Well with Different Models.} Figure \ref{fig:bar} presents the MSE and GAP across various model architectures. The results demonstrate that ST-BCP consistently and significantly reduces the GAP and MSE compared to the baseline BCP. For instance, with the DenseNet121 model, our method suppresses GAP by approximately 80\% (dropping from 4\% to 0.7\%) and nearly doubles the MSE (from $1.3 \times 10^{-4}$ to $0.6 \times 10^{-4}$). Across all evaluated model architectures, ST-BCP consistently maintains GAD below 3\%, whereas the baseline frequently exceeds 4\%. This uniform performance gain across diverse models underscores the robustness and broad applicability of our proposed method.

\section{Related Work}

Conformal prediction \cite{vovk2005algorithmic} is a distribution-free uncertainty quantification \cite{abdar2021review} framework that constructs prediction sets with finite-sample marginal coverage guarantees. It has been widely applied across diverse fields, such as regression \cite{lei2014distribution,romano2019conformalized}, classification \cite{sadinle2019least}, time series \cite{xu2023conformal}, large language models \cite{kumar2023conformal,cherian2024large,su2024api}, and diffusion models \cite{teneggi2023trust}. In recent years, numerous methodological advancements have emerged, including enhancing adaptivity \cite{gauthier2025adaptive,szabadvary2026beyond,kiyani2024conformal}, designing efficient non-conformity score functions \cite{huang2023conformal,angelopoulos2020uncertainty}, and modifying the training structure \cite{liu2024c, stutz2021learning}. Research has also focused on achieving conditional coverage guarantees, such as class-conditional coverage for specific subgroups \cite{jung2022batch,ding2023class} and local coverage for subsets of measurable function classes \cite{gibbs2025conformal}. Additionally, techniques like weighting \cite{barber2023conformal} and online adaptation \cite{gibbs2021adaptive} have been developed to handle cases where the exchangeability assumption is violated.

Conformal e-prediction \cite{vovk2025conformal} employs e-values as the fundamental tool for uncertainty quantification While p-value-based methods are traditionally favored for their statistical efficiency and tighter coverage bounds \cite{angelopoulos2024theoretical}, the e-value framework offers superior mathematical flexibility and structural advantages that extend the scope of CP \cite{gauthier2025values}. Specifically, e-values enable novel application scenarios typically inaccessible to p-values. It provides guaranteed validity for cross-conformal predictors, a property that standard methods satisfy only empirically. A key structural property is that the arithmetic mean of e-values remains a valid e-value \cite{vovk2021values}, which facilitates de-randomization and robust model aggregation. For instance, this property has been leveraged to develop symmetric aggregation strategies \cite{alami2025symmetric}, which combine non-conformity scores from multiple predictors into efficient uncertainty sets. Furthermore, e-values support post-hoc validity \cite{koning2024posthocalphahypothesistesting} and allow for the construction of conformal sets with data-dependent miscoverage levels \cite{grunwald2024beyond,csillag2025prediction}.
\section{Conclusion}
In this paper, we propose a novel method by introducing data-dependent transformation of non-conformity scores. Our approach narrows the gap between empirical coverage and estimated coverage bound while strictly preserving data exchangeability and the consistency of the LOO estimator. We derive the theoretically optimal transformation under monotonicity constraints and provide a computable approximation that is preferred over the baseline identity transformation (original BCP). Experimental results across various datasets, models, and size constraint rules demonstrate that our method significantly narrows the coverage gap, maintaining convergence and stability comparable to the original BCP. This work offers an efficient solution for uncertainty quantification in scenarios requiring controlled prediction set sizes.

\textbf{ Limitations.} The theoretical assumptions in our method are complex, particularly when size constraints depend on both the feature and the calibration set. Furthermore, while ensuring estimator consistency, our derivation does not optimize for estimation efficiency. Future work can focus on relaxing these assumptions or proving that the transformed estimator achieves lower variance than the baseline.
\section*{Acknowledgement}
This research is supported by Guangdong Basic and Applied Basic Research Foundation (Grant No. 2026A1515011367) and the SUSTech-NUS Joint Research Program. This project is also supported by the Jiangsu Provincial Key Discipline Construction Project (Statistics) and open project of Joint Lab for Statistics and Finance (Grant No. 2025JLSF101). We gratefully acknowledge the support of the Center for Computational Science and Engineering at the Southern University of Science and Technology for our research.
\section*{Impact Statement}

This paper presents work whose goal is to advance the field
of machine learning. There are many potential societal
consequences of our work, none of which we feel must be
specifically highlighted here.

\bibliography{example_paper}
\bibliographystyle{icml2026}

\newpage
\appendix
\onecolumn
\section{The Improvement Effects of Different Non-Conformity Score Functions}
\begin{table*}[!t] 
\centering 
\caption{Performance comparison between BCP($h=s$) and our method ST-BCP($h=\mathbb{I}_w$) under different non-conformity score function $S(X,Y)$. The results are obtained using the ResNet50 model trained on the CIFAR-10 dataset with a calibration set size of $n=200$ under the size constraint rule $\mathcal{T}=2$. $\downarrow$ indicates smaller values are better, and \textbf{bold} numbers represent superior results.}
\label{table:diffreent_score} 
\renewcommand\arraystretch{1.3}
\setlength{\tabcolsep}{6mm}{ 
\begin{tabular}{lcccc} 
\toprule 
& \multicolumn{4}{c}{BCP \textbf{/} \textbf{ST-BCP (ours)}} \\
\midrule
Score Function & MisCov(\%) & STD(\%) $\downarrow$ & MSE($\times 10^4$) $\downarrow$ & GAP(\%) $\downarrow$ \\ 
\midrule 
APS & 2.80 / 2.80 & 2.02 / \textbf{1.07} & 4.07 / \textbf{1.14} & 48.2 / \textbf{0.84} \\
Cross-Entropy & 2.26 / 2.26 & 1.28 / \textbf{0.89} & 1.63 / \textbf{0.79} & 5.38 / \textbf{0.72} \\
Rank & 2.20 / 2.20 & 1.34 / \textbf{1.12} & 1.79 / \textbf{1.26} & 35.6 / \textbf{1.14} \\
THR & 2.58 / 2.58 & 1.64 / \textbf{1.09} & 2.69 / \textbf{1.19} & 13.7 / \textbf{0.92} \\
\bottomrule
\end{tabular}} 
\end{table*}
We conducted various experiments with the non-conformity score function to demonstrate that the validity of ST-BCP is insensitive to the selection of it. In table \ref{table:BCP_comparison}, we observe that ST-BCP consistently reduces the coverage gap (GAP) and enhances the stability of the LOO estimator. The improvement is particularly substantial when the baseline score performs poorly. For example, ST-BCP drastically reduces the GAP of the APS from 48.2\% to 0.84\%.

\section{Evaluation Metrics}
\label{metrics}
To quantify the performance of ST-BCP across $M$ independent experimental trials, we employ a suite of statistical metrics. In the $m$-th trial, let $(X_{n+1}^{(m)}, Y_{n+1}^{(m)})$ denote the test point, $\hat{\alpha}^{LOO}_{(m)}(h)$ the leave-one-out estimator, and $C(X_{n+1}^{(m)}; h)$ the resulting prediction set. The following metrics are computed to evaluate estimation precision and coverage stability:

Empirical Miscoverage (MisCov): The observed frequency with which the true label falls outside the prediction set.
$$\text{MisCov} = \frac{1}{M}\sum_{m=1}^{M}\mathbb{I}(Y_{n+1}^{(m)}\notin C(X_{n+1}^{(m)};h))$$
Empirical Expectation of Miscoverage Level: The average data-dependent miscoverage level across all trials.
$$\mathbb{E}[\tilde{\alpha}_{n+1}(h)] = \frac{1}{M}\sum_{m=1}^{M}{\tilde{\alpha}^{(m)}_{n+1}}(h)$$
Mean Squared Error (MSE): Quantify the rate at which the LOO estimator converges to $\mathbb{E}[\tilde{\alpha}_{n+1}(h)]$
$$\text{MSE} = \frac{1}{M}\sum_{m=1}^{M}{\left(\hat{\alpha}^{LOO}_{(m)}(h)-\mathbb{E}[\tilde{\alpha}_{n+1}(h)]\right)^{2}}$$
Coverage Gap (GAP): The average absolute gap between the estimated bound and empirical miscoverage.
$$\text{GAP} = \frac{1}{M}\sum_{m=1}^{M}\bigg\lvert \hat{\alpha}^{LOO}_{(m)}(h)-\text{MisCov} \bigg\lvert$$
Sample standard deviation (STD): The stability of the LOO estimator.
$$\text{STD} = std(\hat{\alpha}^{LOO}_{(1)}(h), \dots, \hat{\alpha}^{LOO}_{(M)}(h))$$

\section{Adaptive Size Constraint Rule $\mathcal{T}$}
\label{a_scr}
\subsection{Entropy-Based Construction}
When $\mathcal{T}$ is not a constant, it is typically desirable for difficult data points to receive large prediction sets. Since entropy is a standard measure for quantifying uncertainty, we employ it to adjust $\mathcal{T}$; similarly, the Gini index could also be applied. We focus on classification tasks where $\mathcal{T}$ is discrete. To this end, a binning function $b$ is used to map the entropy value to a bounded positive integer. One may also impose $0<\mathcal{T}_{min}<\mathcal{T}<\mathcal{T}_{max}\leq|\mathcal{Y}|$ to further constraints on the output. Let $EN\in \mathbb{R}^{+}$ be the entropy; then:
$$\mathcal{T}(D,X)=b(EN),\quad b:\mathbb{R}^{+}\rightarrow\{\mathcal{T}_{min},...,\mathcal{T}_{max}\}$$

We consider the case where $\mathcal{T}$ depends on both the calibration set $D=D_{cal}=\{(X_i,Y_i)\}_{i=1}^{n}$ and the test point feature $X=X_{n+1}$. Taking $\mathcal{T}(D_{cal},X_{n+1})$ as an example, the binning function $b$ can be constructed based on the relative magnitude of the test point’s entropy $EN_{n+1}=EN(D_{cal};X_{n+1})$ within the set of pseudo-calibration entropy $\{EN_{i}=EN(D_{cal},X_i)\}_{i=1}^{n}$. In the experiment corresponding to Table \ref{table:BCP_comparison} (where $\mathcal{T}=\mathcal{T}(D,X)$), the maximum entropy $EN_{max}$ and minimum entropy $EN_{min}$ are derived from the combined set of entropy $\{EN_1, \dots, EN_n, EN_{n+1}\}$ to serve as the boundaries for binning.

To calculate entropy, we employ a $k$-nearest neighbor (k-NN) approach following PCA-based dimensionality reduction. A pre-trained ResNet-18 model is utilized for feature extraction; specifically, by removing the final fully connected layer, we extract 512-dimensional feature representations from the global average pooling layer. These high-dimensional features for both the calibration set and the test point are subsequently projected into a two-dimensional space via Principal Component Analysis (PCA). Following this, we compute the label distribution $\pi^{k}(D,X)$ based on the $k$ nearest neighbors of the reduced feature $X$ within the reduced calibration set $D$. The complete procedure for deriving $\mathcal{T}(D,X)$ where $D=\{(X_i,Y_i)\}_{i=1}^n$ and $X=X_{n+1}$ representing the processed feature representations is detailed in Algorithm~\ref{alg:de_en_T}.

\begin{algorithm}[t]
\caption{single size constraint rule $\mathcal{T}(D,X)$}
\label{alg:de_en_T}
\begin{algorithmic}[1]
\STATE {\bfseries Input:} Dataset $D=\{(X_i,Y_i)\}_{i=1}^n$, feature $X=X_{n+1}$ , parameters $\mathcal{T}_{min}, \mathcal{T}_{max}, k, p$
\STATE {\bfseries Output:} $\mathcal{T}(D,X_{n+1})$
\FOR{$j=1$ {\bfseries to} $n+1$}
    \STATE Compute the $k$-nearest neighbors of $X_j$ among $\{X_i\}_{i=1}^{n+1}$, denote the index set by $N_j^k$
    \STATE Compute the empirical distribution $\pi_j^k$ of $\{Y_i\}_{i \in N_j^k}$, 
    \STATE Compute the entropy $EN_j = - \sum_{y\in \mathcal{Y}} \pi_j^k(y)\log \pi_j^k(y)$
\ENDFOR
\STATE Compute $EN_{max}=max(EN_1,...,EN_{n+1});EN_{min}=min(EN_1,...,EN_{n+1})$
\FOR{$l=1$ {\bfseries to} $\mathcal{T}_{max}-\mathcal{T}_{min}+1$}
    \STATE Compute bin thresholds $bins(l)=EN_{min}+(EN_{max}-EN_{min})\left(\frac{l-1}{L-1}\right)^{p}$
\ENDFOR
\STATE Compute $\mathcal{T}(D,X)=b(EN_{n+1})=\mathcal{T}_{min}-1+\sum_{l=1}^{L}{\mathbb{I}(EN_{n+1}\geq bins(l))}$
\end{algorithmic}
\end{algorithm}

\begin{algorithm}[t]
\caption{multiple size constraint rule $\mathcal{T}(Q(D,X))$}
\label{alg:mul_T}
\begin{algorithmic}[2]
\STATE {\bfseries Input:} Dataset $D_{cal}=\{(X_i,Y_i)\}_{i=1}^n$, test feature $X_{n+1}$ , parameters $\mathcal{T}_{min}, \mathcal{T}_{max}, k, p$
\STATE {\bfseries Output:} $\mathcal{T}(Q(D_i,X_i)),i=1,\cdots,n+1$ and $\mathcal{T}(Q(D_{i}^y,X_{i})),i=1,\cdots,n;y\in \mathcal{Y}$
\FOR{$i=1$ {\bfseries to} $n+1$}
    \STATE Extract a 512-dimensional feature vector $X_i^{ex}$ from $X_i$ using pre-trained ResNet-18
\ENDFOR
\STATE Apply PCA to $\{X_i^{\mathrm{ex}}\}_{i=1}^{n+1}$ and obtain
    $\{X_i^{p1}\}_{i=1}^{n+1}$, $X_i^{p1}\in\mathbb{R}^2$ 
\STATE Apply PCA to $\{X_i^{\mathrm{ex}}\}_{i=1}^{n}$ and obtain
    $\{X_i^{p2}\}_{i=1}^{n}$, $X_i^{p2}\in\mathbb{R}^2$
\FOR{$y\in\mathcal{Y}$}
    \FOR{$i=1$ {\bfseries to} $n$}
        \STATE Compute $\mathcal{T}(Q(D_{i}^y,X_{i}))$, where $Q(D_{i}^y,X_{i})=(\{(X_{n+1}^{p1},y)\}\cup\{(X_{j}^{p1},Y_j)\}_{j=1,j\neq i}^n,X_j^{p1})$ 
    \ENDFOR
\ENDFOR
\STATE Compute $\mathcal{T}(Q(D_{n+1},X_{n+1}))$, where $Q(D_{n+1},X_{n+1})=(\{(X_i^{p1},Y_i)\}_{i=1}^{n},X_{n+1}^{p1})$ 
\FOR{$i=1$ {\bfseries to} $n$}
    \STATE Compute $\mathcal{T}(Q(D_{i},X_{i}))$, where $Q(D_{i},X_{i})=(\{(X_{j}^{p2},Y_j)\}_{j=1,j\neq i}^n,X_{i}^{p2})$ 
\ENDFOR
\end{algorithmic}
\end{algorithm}

Since the score transformation is dependent on $\mathcal{T}$, multiple size constraints must be evaluated. Although their inputs vary in form, Algorithm~\ref{alg:mul_T} provides a unified approach for organizing these diverse inputs. Specifically, we let $Q(D,X)$ denote the feature extraction and dimensionality reduction process applied to a dataset–query pair; the corresponding $\mathcal{T}$ is then computed by invoking Algorithm~\ref{alg:de_en_T}.

When $\mathcal{T}$ is allowed to depend only on the test point feature X, the entropy is computed directly from the model’s softmax output $\pi(X)$. Specifically, we define
$$EN_{max}=log(|\mathcal{Y}|),\quad EN_{min}=0,\quad EN(X)=-\sum_{y\in\mathcal{Y}}{\pi_y(X)log(\pi_y(X))}$$
Since this setting does not involve any dataset-level input $D$, the computation is a simplified special case of $\mathcal{T}(D,X)$. Consequently, both the computational procedure and the number of required size constraints are substantially reduced, and we omit the explicit algorithmic description for brevity.

\subsection{Sensitivity Analysis of the k-NN Parameter $k$}
\label{sec:k_analysis}

The adaptive size constraint rule $\mathcal{T}(D,X)$ relies on the label distribution estimated from the $k$-nearest neighbors in the reduced feature space. Therefore, it is natural to examine the sensitivity of $\mathcal{T}(D,X)$ and the resulting transformed threshold $w(D,X)$ with respect to the choice of $k$.

In the corresponding experiment reported in Table~\ref{table:BCP_comparison}, we set $k=20$. To evaluate robustness, we additionally considered $k\in\{20,21,30,70\}$, while fixing the calibration set and the test point. Table~\ref{table:k_dist} reports the empirical distribution of the adaptive size constraint $\mathcal{T}(D,X)$ under different values of $k$.

\begin{table}[ht]
\centering
\caption{Distribution of adaptive size constraint $\mathcal{T}(D,X)$ under different values of $k$.}
\label{table:k_dist}
\begin{tabular}{c|ccc}
\hline
 & $\mathcal{T}=1$ & $\mathcal{T}=2$ & $\mathcal{T}=3$ \\
\hline
$k=20$ & 0.68 & 0.31 & 0.01 \\
$k=21$ & 0.63 & 0.36 & 0.01 \\
$k=30$ & 0.55 & 0.45 & 0 \\
$k=70$ & 0.67 & 0.33 & 0 \\
\hline
\end{tabular}
\end{table}

To further quantify the sensitivity of $\mathcal{T}(D,X)$ and $w(D,X)$, we computed the absolute variation relative to the baseline choice $k=20$. Specifically, letting$\Delta \mathcal{T}=\mathcal{T}_k(D,X)-\mathcal{T}_{20}(D,X),\quad 
\Delta w=w_k(D,X)-w_{20}(D,X)$. We report their empirical distributions over calibration and test points in Table~\ref{table:k_delta_T} and Table~\ref{table:k_delta_w}.

\begin{table}[ht]
\centering
\caption{Distribution of absolute changes in $\mathcal{T}(D,X)$ relative to $k=20$.}
\label{table:k_delta_T}
\begin{tabular}{c|ccc}
\hline
 & $|\Delta \mathcal{T}|=0$ & $|\Delta \mathcal{T}|=1$ & $|\Delta \mathcal{T}|=2$ \\
\hline
$k=21$ & 0.92 & 0.085 & 0 \\
$k=30$ & 0.74 & 0.26 & 0 \\
$k=70$ & 0.67 & 0.32 & 0.005 \\
\hline
\end{tabular}
\end{table}

\begin{table}[!hb]
\centering
\caption{Distribution of absolute changes in $w(D,X)$ relative to $k=20$.}
\label{table:k_delta_w}
\begin{tabular}{c|cccc}
\hline
 & $|\Delta w|=0$
 & $0<|\Delta w|\leq0.5$
 & $0.5<|\Delta w|\leq1$
 & $1<|\Delta w|$ \\
\hline
$k=21$ & 0.92 & 0.069 & 0.005 & 0.009 \\
$k=30$ & 0.74 & 0.18 & 0.020 & 0.059 \\
$k=70$ & 0.67 & 0.21 & 0.045 & 0.075 \\
\hline
\end{tabular}
\end{table}

The results indicate that $\mathcal{T}(D,X)$ is relatively stable with respect to the choice of $k$. Even when $k$ increases from $20$ to $70$, most adaptive size constraints remain unchanged. Moreover, when changes occur, they are typically of magnitude one. Since the transformed threshold $w(D,X)$ depends on $\mathcal{T}(D,X)$, this stability propagates to the score transformation. In particular, the deviation of $w(D,X)$ is predominantly zero or remains very small across different choices of $k$.
\section{The Details of $w(D,X)$}
\label{detail_w}
In some cases, $w(D,X)$ can be obtained in closed form for faster computation, as opposed to using a binary search. For classification tasks, denote $X$'s score sequence as $S(X,y)_{y\in\mathcal{Y}}$. Let:
$$S(X,y)_{(1)}\leq\cdots \leq S(X,y)_{(|\mathcal{Y}|)} < S(X,y)_{(\mathcal{\lvert Y \rvert}+1)} \ \stackrel{\text{def}}{=} +\infty$$
Then:
$$w(D,X)=\sup\{l>0:|\{y:S(X,y)<l\}|\leq\mathcal{T}(D,X)\}=S(X,y)_{(\mathcal{T}(D,X)+1)}$$
For regression tasks, assume that the score function is $S(X,Y)=||f(X)-Y||_{q}$ and $\mathcal{Y}= \mathbb{R}^{d}$, $q > 0,\ l > 0$ then,
$$|\{y:S(X,y)<l\}|=|B_{q}^{d}(l)|;B_{q}^{d}(l)=\{y\in \mathbb{R}^{d}:||y||_{p}<l\}$$
$$|B_{q}^{d}(l)| = \frac{\bigl[ 2\,\Gamma(1+1/q) \bigr]^d}{\Gamma\bigl(d/q + 1\bigr)}l^d$$
$B_{q}^{d}(l)$ is the ball of radius $l$ in the normed linear space $(\mathbb{R}^{d},||\cdot||_{q})$. Computing its volume is a classical result. We briefly outline the derivation \cite{wang2005volumes}: using symmetry and variable substitution, the integration domain is transformed into a simplex, after which the Dirichlet integral formula is applied to obtain the volume.
$$|B_{q}^{d}(l)|\leq\mathcal{T}(D,X) \Leftrightarrow l\leq \frac{\bigl[\mathcal{T}(D,X)\Gamma(d/q+1)\bigl]^{1/d}}{2\Gamma(1+1/q)}$$
$$w(D,X)=\sup\{l>0:|B_{q}^{d}(l)|\leq\mathcal{T}(D,X)\}=\frac{\bigl[\mathcal{T}(D,X)\Gamma(d/q+1)\bigl]^{1/d}}{2\Gamma(1+1/q)}$$

\section{Discussion of Taylor Approximation}
\label{taylor}
A fundamental limitation of BCP is the coverage guarantee. Specifically, the BCP guarantee relies on a first-order Taylor approximation to relate the data-dependent miscoverage level $\tilde{\alpha}_{n+1}$ to the actual coverage. 
\begin{align*}
    1
    &\underset{\text{Markov Error}}{\geq} \mathbb{E}\left[\frac{\mathbb{P}\left(Y_{n+1}\notin C_{h}^{\tilde{\alpha}_{n+1}(h)}\left(X_{n+1}\right)\mid\tilde{\alpha}_{n+1}(h)\right)}{\tilde{\alpha}_{n+1}(h)}\right]
    \underset{\text{Taylor Error}}{\approx}\frac{\mathbb{E}\left[\mathbb{P}\left(Y_{n+1}\notin C_{h}^{\tilde{\alpha}_{n+1}(h)}\left(X_{n+1}\right)\mid\tilde{\alpha}_{n+1}(h)\right)\right]}{\mathbb{E}[\tilde{\alpha}_{n+1}(h)]}\\
    &=\frac{\mathbb{P}\left(Y_{n+1}\notin C_{h}^{\tilde{\alpha}_{n+1}(h)}\left(X_{n+1}\right)\right)}{\mathbb{E}[\tilde{\alpha}_{n+1}(h)]}
    \Rightarrow \mathbb{P}\left(Y_{n+1}\in C_{h}^{\tilde{\alpha}_{n+1}(h)}\left(X_{n+1}\right)\right)
    \geq 1- \mathbb{E}[\tilde{\alpha}_{n+1}(h)]
\end{align*}
In this section, we elaborate on these issues in detail, including the necessity of Taylor approximations, whether the proposed transformations violate the original coverage guarantee, the specific conditions under which the guarantee may fail, and potential remedies. For the following derivation, let $X$ and $Y$ be non-negative random variables. Utilizing the Taylor series expansion of an infinite series, we obtain:
\begin{align*}
    \mathbb{E}\left[\frac{X}{Y}\right] 
    = \frac{\mathbb{E}[X]}{\mathbb{E}[Y]} \left[ \sum_{k=0}^{\infty} (-1)^k \frac{\mathbb{E}[(Y-\mathbb{E}[Y])^k]}{(\mathbb{E}[Y])^k} \right] 
    + \frac{1}{\mathbb{E}[Y]} \left[ \sum_{k=0}^{\infty} (-1)^k \frac{\mathbb{E}[(X-\mathbb{E}[X])(Y-\mathbb{E}[Y])^k]}{(\mathbb{E}[Y])^k} \right]
\end{align*}
The expansion is valid under the following condition. 
$$\left| \frac{Y - \mathbb{E}[Y]}{\mathbb{E}[Y]} \right| < 1$$
Therefore, the first-order Taylor approximation
$$
\mathbb{E}\left[\frac{X}{Y}\right]
\approx
\frac{\mathbb{E}[X]}{\mathbb{E}[Y]}
$$
is exact if and only if:
\begin{align*}
    \frac{1}{\mathbb{E}[Y]} \left[ \sum_{k=1}^{\infty} (-1)^k \frac{\mathbb{E}[X(Y-\mathbb{E}[Y])^k]}{(\mathbb{E}[Y])^k} \right]=0
\end{align*}
To simplify the analysis, we focus on the concentration-dominated regime. When \(Y\) is highly concentrated and has a strict positive lower bound, the higher-order terms become small, resulting in a small Taylor approximation error. We note that this is only a sufficient condition. Even when \(Y\) is not concentrated, the higher-order terms may still cancel due to the dependence structure between \(X\) and \(Y\). For example, when \(X=Y\), the infinite series above is still equal to zero.

In BCP and ST-BCP: $X$ and $Y$ correspond to:
$$
X=\mathbb{P}\left(Y_{n+1}\notin C_h^{\tilde{\alpha}_{n+1}(h)}(X_{n+1})\mid \tilde{\alpha}_{n+1}(h)\right),
\quad
Y=\tilde{\alpha}_{n+1}(h).
$$
And
$$
\tilde{\alpha}_{n+1}(h)
=\frac{1}{n+1}\left(\frac{\sum_{i=1}^n h_i}{h(w_{n+1};D_{n+1},X_{n+1})}+1\right)=\frac{(\sum_{i=1}^nh_i)/(n+1)}{h(w_{n+1};D_{n+1},X_{n+1})}+\frac{1}{n+1}
$$
Since \(h\ge 0\),
$$\tilde{\alpha}_{n+1}(h)\ge \frac{1}{n+1}$$
thus \(\tilde{\alpha}_{n+1}(h)\) has a strict positive lower bound $1/(n+1)$. The numerator is an empirical average and is typically concentrated. Therefore, the main source of fluctuation comes from the denominator term:
$$h(w_{n+1};D_{n+1},X_{n+1})$$
For both the identity transformation and the proposed \(\mathbb{I}_w\) transformation,
\[
h(w_{n+1};D_{n+1},X_{n+1})=w_{n+1}.
\]
\begin{figure*}[!t]
    \centering
    \begin{subfigure}{0.32\textwidth}
        \centering
        \includegraphics[width=\linewidth]{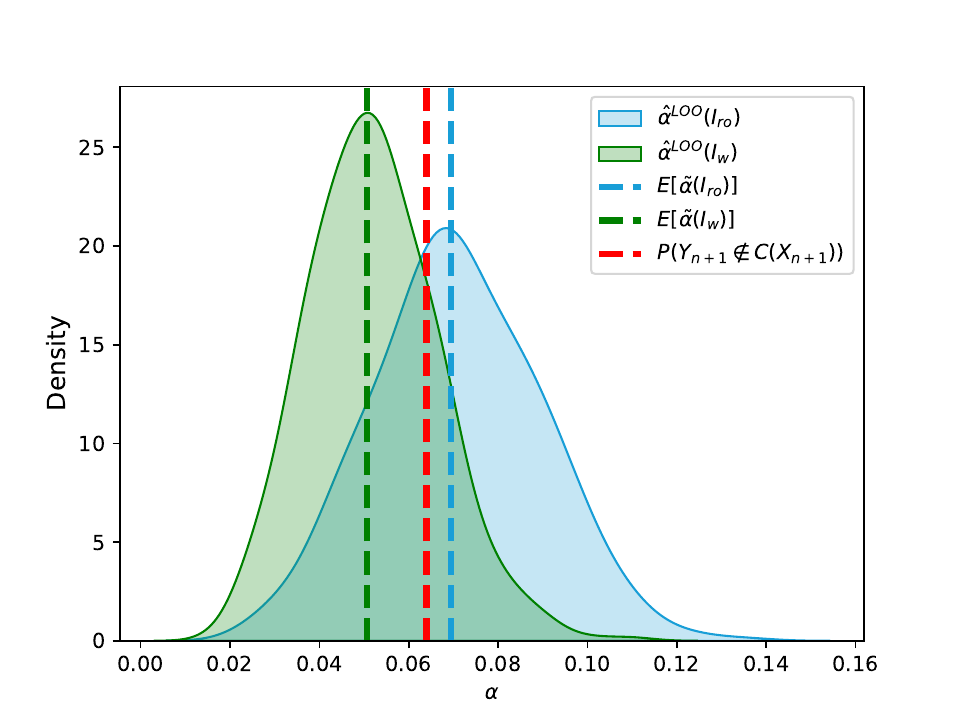}
        \subcaption{CIFAR-10 and $\mathcal{T}=1$}
        \label{fig:fix_CIFAR10_rod}
    \end{subfigure}
    \hfill
    \begin{subfigure}{0.32\textwidth}
        \centering
        \includegraphics[width=\linewidth]{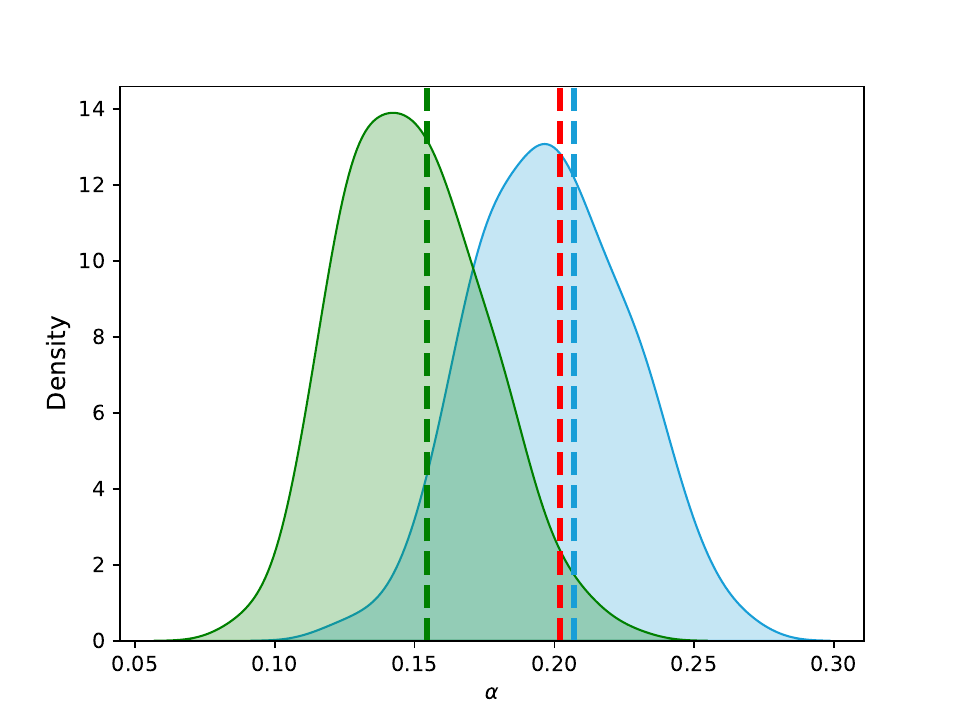}
        \subcaption{CIFAR-100 and $\mathcal{T}=1$}
        \label{fig:fix_CIFAR100_rod}
    \end{subfigure}
    \hfill
    \begin{subfigure}{0.32\textwidth}
        \centering
        \includegraphics[width=\linewidth]{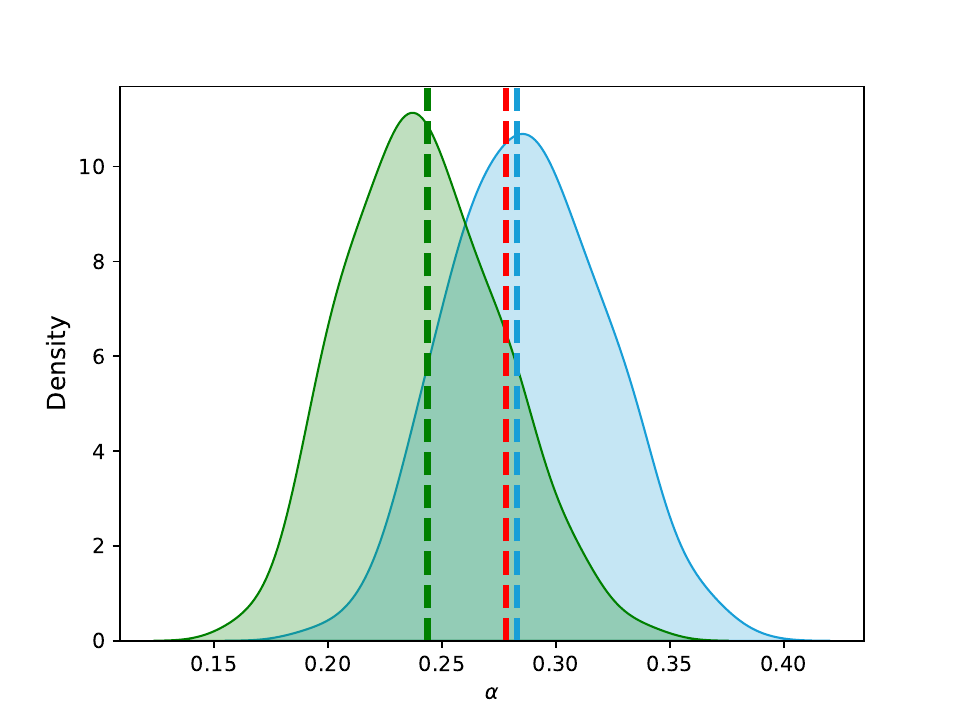}
        \subcaption{Tiny-ImageNet and $\mathcal{T}=1$}
        \label{fig:fix_TinyImageNet_rod}
    \end{subfigure}

    \vspace{0.3cm}

    \begin{subfigure}{0.32\textwidth}
        \centering
        \includegraphics[width=\linewidth]{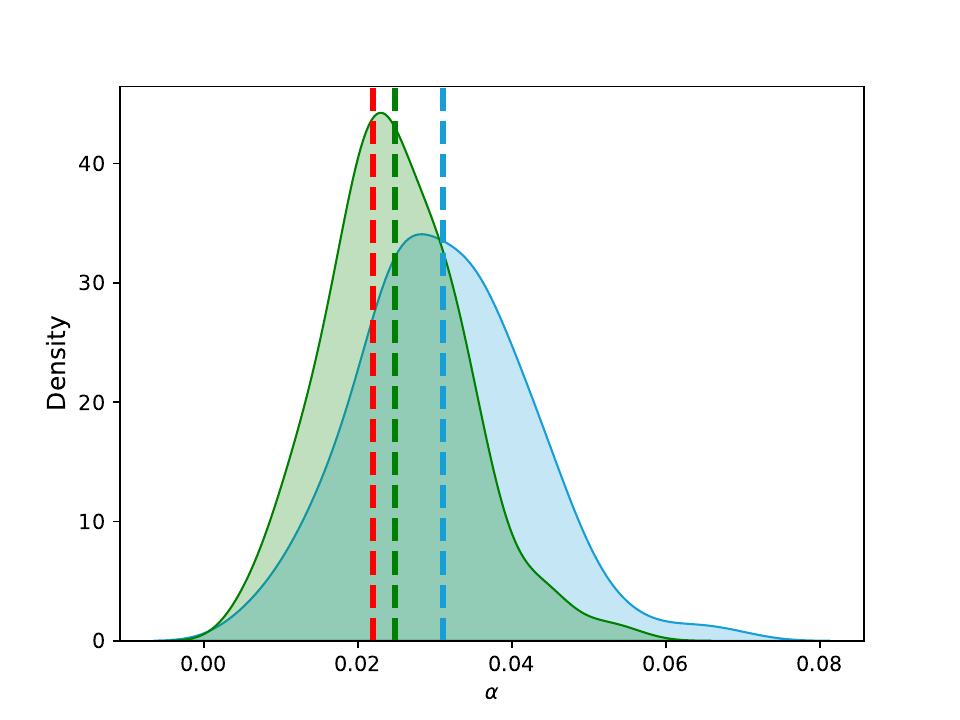}
        \subcaption{CIFAR-10 and $\mathcal{T}=2$}
        \label{fig:ad_CIFAR10_rod}
    \end{subfigure}
    \hfill
    \begin{subfigure}{0.32\textwidth}
        \centering
        \includegraphics[width=\linewidth]{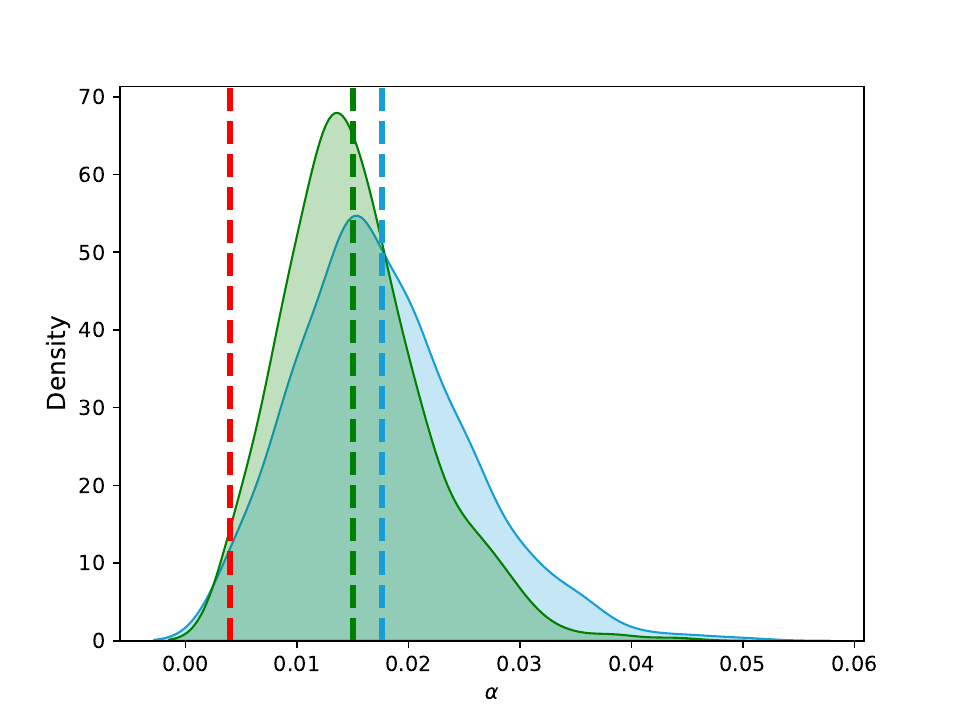}
        \subcaption{CIFAR-100 and $\mathcal{T}=20$}
        \label{fig:ad_CIFAR100_rod}
    \end{subfigure}
    \hfill
    \begin{subfigure}{0.32\textwidth}
        \centering
        \includegraphics[width=\linewidth]{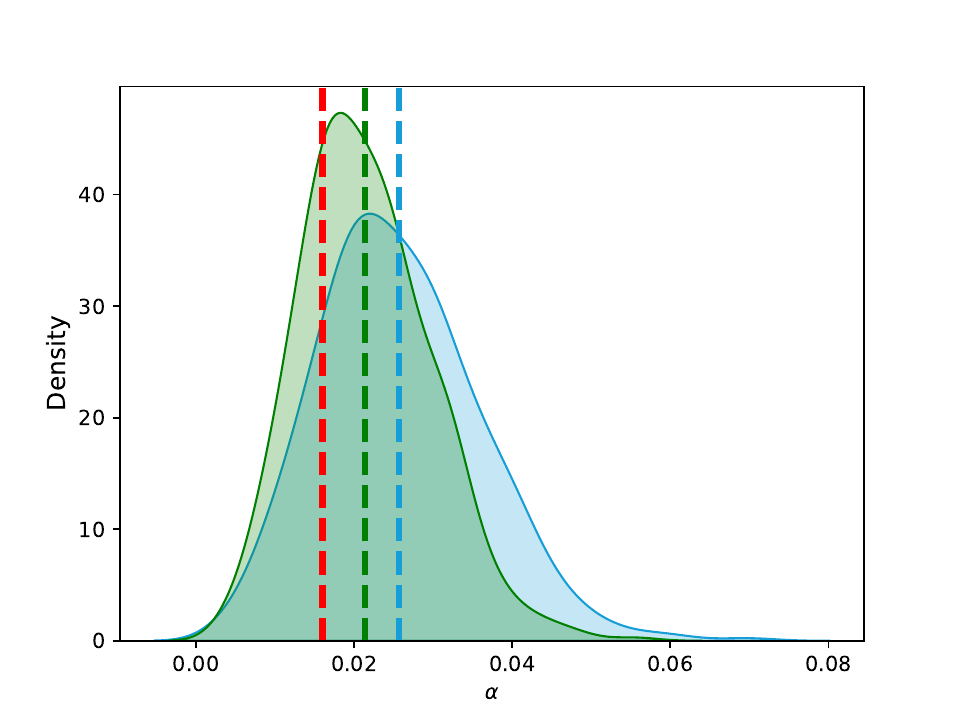}
        \subcaption{Tiny-ImageNet and $\mathcal{T}=40$}
        \label{fig:ad_TinyImageNet_rod}
    \end{subfigure}

    \caption{Under different datasets, the MisCov, kernel density estimation plots of the empirical distribution of the LOO estimator, and the empirical miscoverage levels of both are presented. The results were obtained with the size constraint rule, using the ResNet50 model and the calibration set size $n=200$.}
    \label{fig:ro_density}
\end{figure*}

Hence, the proposed transformation does not substantially change the concentration behavior of \(\tilde{\alpha}_{n+1}(h)\), and therefore does not substantially change the Taylor approximation error induced by concentration.

If we only consider whether the coverage guarantee holds, Jensen's inequality gives a weaker sufficient condition that does not directly rely on the Taylor approximation:
\begin{align*}
    &1\geq \mathbb{E}\left[\frac{\mathbb{P}\left(Y_{n+1}\notin C_{h}^{\tilde{\alpha}_{n+1}(h)}\left(X_{n+1}\right)\mid\tilde{\alpha}_{n+1}(h)\right)}{\tilde{\alpha}_{n+1}(h)}\right]\\
    &\geq \frac{\mathbb{P}\left(Y_{n+1}\notin C_{h}^{\tilde{\alpha}_{n+1}(h)}\left(X_{n+1}\right)\right)}{\mathbb{E}[\tilde{\alpha}_{n+1}(h)]} +Cov \left(\mathbb{P}\left(Y_{n+1}\notin C_{h}^{\tilde{\alpha}_{n+1}(h)}\left(X_{n+1}\right)\mid\tilde{\alpha}_{n+1}(h)\right), \frac{1}{\tilde{\alpha}_{n+1}(h)} \right)\Rightarrow\\
    & \mathbb{P}\left(Y_{n+1}\in C_{h}^{\tilde{\alpha}_{n+1}(h)}\left(X_{n+1}\right)\right)
    \geq 1- \mathbb{E}[\tilde{\alpha}_{n+1}(h)] \left(1- Cov \left(\mathbb{P}\left(Y_{n+1}\notin C_{h}^{\tilde{\alpha}_{n+1}(h)}\left(X_{n+1}\right)\mid\tilde{\alpha}_{n+1}(h)\right), \frac{1}{\tilde{\alpha}_{n+1}(h)} \right) \right)
\end{align*}
\begin{figure*}[!t]
    \centering
    \begin{subfigure}{0.32\textwidth}
        \centering
        \includegraphics[width=\linewidth]{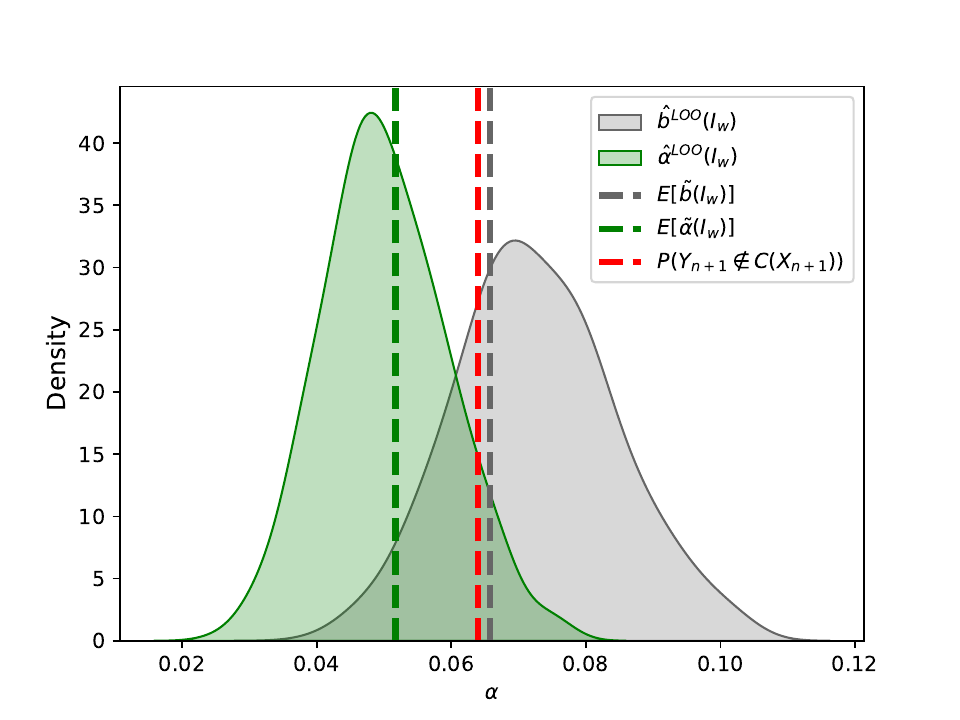}
        \subcaption{CIFAR-10 and $\mathcal{T}=1$}
        \label{fig:fix_CIFAR10_cord}
    \end{subfigure}
    \hfill
    \begin{subfigure}{0.32\textwidth}
        \centering
        \includegraphics[width=\linewidth]{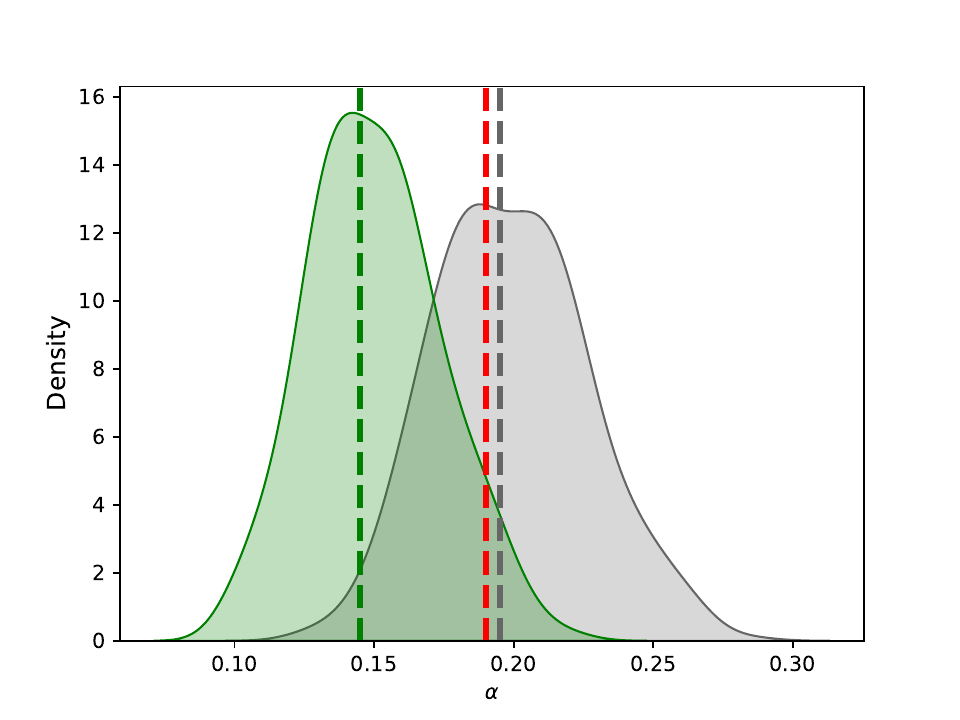}
        \subcaption{CIFAR-100 and $\mathcal{T}=1$}
        \label{fig:fix_CIFAR100_cord}
    \end{subfigure}
    \hfill
    \begin{subfigure}{0.32\textwidth}
        \centering
        \includegraphics[width=\linewidth]{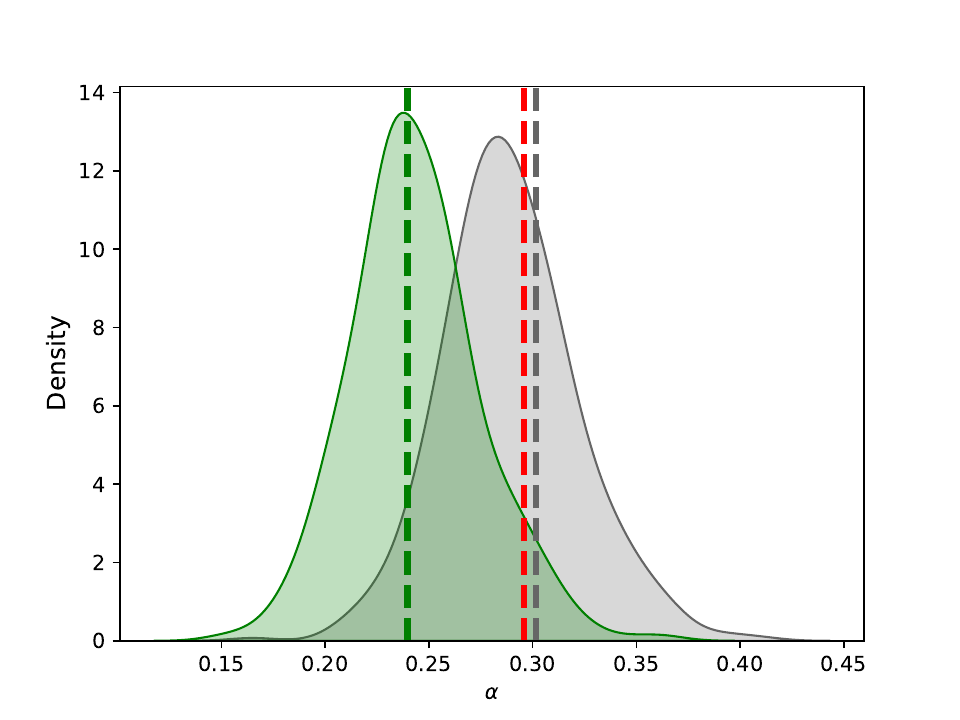}
        \subcaption{Tiny-ImageNet and $\mathcal{T}=1$}
        \label{fig:fix_TinyImageNet_cord}
    \end{subfigure}

    \vspace{0.3cm}

    \begin{subfigure}{0.32\textwidth}
        \centering
        \includegraphics[width=\linewidth]{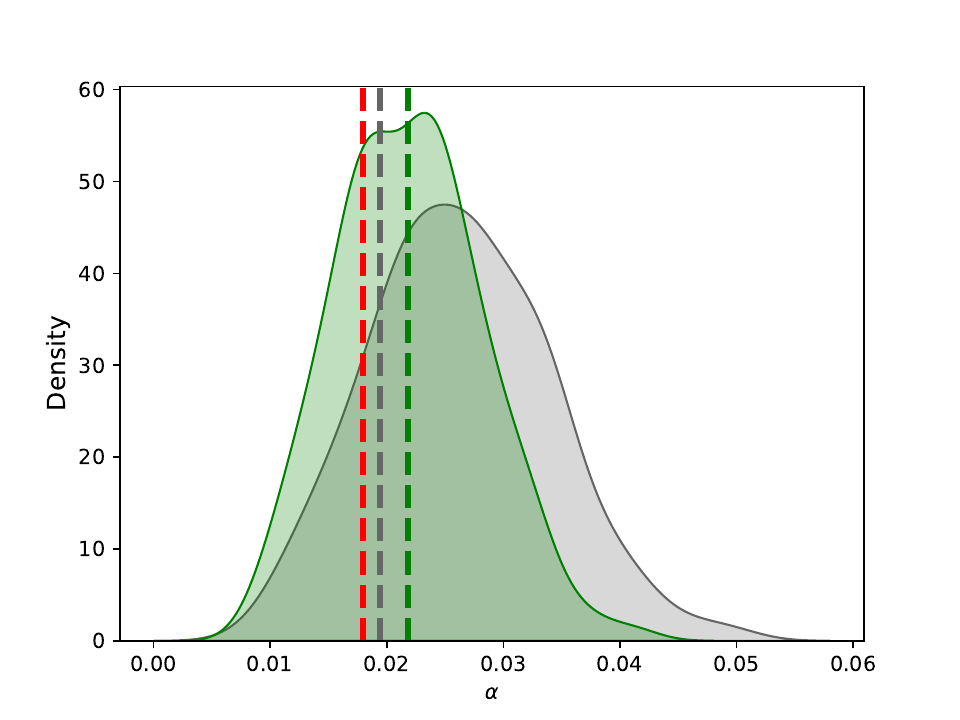}
        \subcaption{CIFAR-10 and $\mathcal{T}=2$}
        \label{fig:ad_CIFAR10_cord}
    \end{subfigure}
    \hfill
    \begin{subfigure}{0.32\textwidth}
        \centering
        \includegraphics[width=\linewidth]{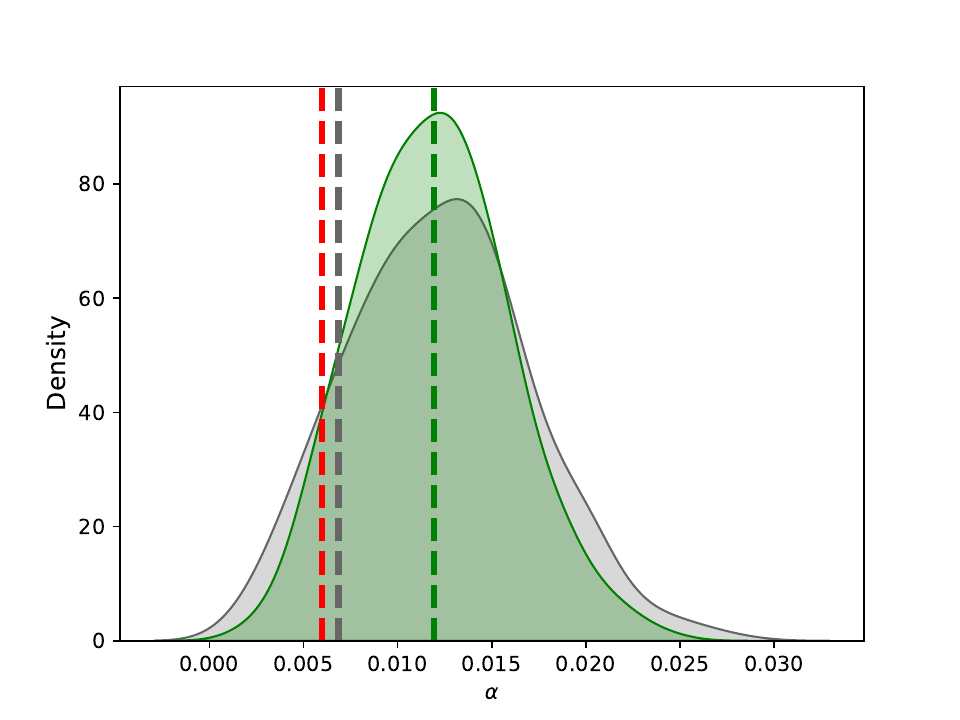}
        \subcaption{CIFAR-100 and $\mathcal{T}=20$}
        \label{fig:ad_CIFAR100_cord}
    \end{subfigure}
    \hfill
    \begin{subfigure}{0.32\textwidth}
        \centering
        \includegraphics[width=\linewidth]{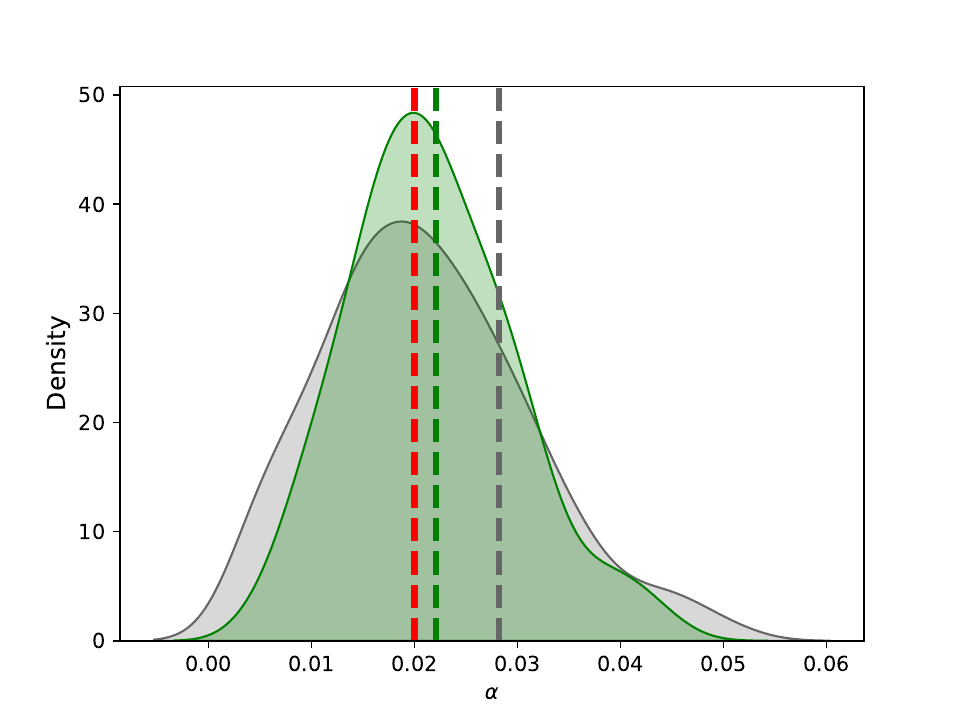}
        \subcaption{Tiny-ImageNet and $\mathcal{T}=40$}
        \label{fig:ad_TinyImageNet_cord}
    \end{subfigure}

    \caption{Under different datasets, the MisCov, kernel density estimation plots of the empirical distribution of the LOO estimator, and the empirical miscoverage levels of both are presented. The results were obtained with the size constraint rule, using the ResNet50 model and the calibration set size $n=200$.}
    \label{fig:cor_density}
\end{figure*}

Therefore,
$$\mathbb{P}\left(Y_{n+1}\in C_h^{\tilde{\alpha}_{n+1}(h)}(X_{n+1})\right)\ge1-\mathbb{E}[\tilde{\alpha}_{n+1}(h)]\left(1-Cov(\cdot)\right)$$
The original coverage guarantee is recovered when the covariance term is approximately zero. This occurs when either the conditional miscoverage probability or \(1/\tilde{\alpha}_{n+1}(h)\) is highly concentrated.

The Taylor approximation or the covariance-based guarantee may fail when \(\tilde{\alpha}_{n+1}(h)\) is not concentrated. This may occur when the size constraint is strict or when the prediction model is unstable. To address this issue, we propose two potential solutions: (1) implementing a robust transformation to optimize the non-concentrated term $h(w_{n+1}; D_{n+1}, X_{n+1})$ ; and (2) adopting a method that bypasses the need for Taylor approximations entirely, such as the corrected coverage level approach.

To address the lack of concentration in $\tilde{\alpha}_{n+1}(h)$, we can enforce $h(w_{n+1}; D_{n+1}, X_{n+1}) = 1$. This is achieved by removing the $w(D, X)$ coefficient from the previously defined transformation $\mathbb{I}_w = w(D, X)\mathbb{I}(s \geq w(D, X))$. The resulting robust transformation is defined as $h(s; D, X) = \mathbb{I}_{ro} \stackrel{\text{def}}{=} \mathbb{I}(s \geq w(D, X))$.

For the coverage guarantee that does not rely on Taylor approximation\cite{gauthier2025backward}. Here, we provide a brief method, call it the corrected coverage level.
$$\mathbb{P}(Y_{n+1}\notin C^{\tilde{\alpha}_{n+1}(h)}_h(X_{n+1}))=\mathbb{P}(E^{n+1}(Y_{n+1},h)\geq \frac{1}{\tilde{\alpha}_{n+1}(h)})\leq \mathbb{E}\left[\tilde{\alpha}_{n+1}(h)E^{n+1}(Y_{n+1},h)\right]$$
 This is similar to conformal e-prediction methods, providing a rigorous coverage guarantee derived from the Markov inequality, which can be obtained without requiring exchangeability. As introduced in the main text, under certain assumptions, this coverage guarantee bound can be directly estimated using the leave-one-out estimator. $\tilde{\alpha}_{i}(h)E^{i}(Y_{i},h)$ is denoted as $\tilde{b}_i(h)$, $i=1,\cdots,n+1$. The corrected leave-one-out estimator $\hat{b}^{LOO}(h)$ is defined as follows:
 $$\hat{b}^{LOO}(h)=\frac{1}{n}\sum_{i=1}^n\tilde{b}_i(h)$$
In Figure \ref{fig:cor_density} and Figure \ref{fig:ro_density}, we compare the corrected coverage level method (\(\tilde{b}(\mathbb{I}_w)\)) and the robust transformation (\(\mathbb{I}_{ro}\)) with the original transformation (\(\mathbb{I}_w\)) under different size constraint regimes. When the size constraint \(\mathcal{T}\) is small, both the corrected coverage level method and the robust transformation reduce cases in which the true coverage exceeds the coverage guarantee bound, thereby alleviating the overconfidence of the leave-one-out estimator.

When the size constraint \(\mathcal{T}\) is large, the Taylor approximation becomes accurate, and the original transformation is sufficient to maintain the validity of the coverage guarantee. In this case, the corrected coverage level method and the robust transformation become slightly conservative. Moreover, in both regimes, the estimation effect of the leave-one-out estimators produced by the original transformation is consistently better than those produced by the corrected coverage level method and the robust transformation.

Based on these observations, we recommend using the corrected coverage level method or the robust transformation when the size constraint \(\mathcal{T}\) is small and using the original transformation when \(\mathcal{T}\) is large.
\section{Proof of Theorem\ref{LOOestimate}}
\label{proof_T1}

\textbf{Theorem\ref{LOOestimate}.} \textit{Assume that the transformation $h(s; D, X)$ is bounded such that $h(s; D, X) \in [h_{min}, h_{max}]$ with $0 < h_{min} \leq h_{max}$, and the calibration set size satisfies $n > h_{max}/h_{min}$. Furthermore, assume that $h$ and the size constraint rule satisfies the following properties almost surely:
\begin{enumerate}[label=(P\arabic*), ref=P\arabic*]
    \item \label{prop:P1} For all $y \in \mathcal{Y}$, the sequences $\{h_j\}_{j=1}^n$, $\{h_i^y\}_{i=1}^n$, and $\{h_j(y)\}_{j=1}^n$ all satisfy Chernoff–Hoeffding bounds.
    \item \label{prop:P2} For all $y \in \mathcal{Y}$ have:
    $$\lvert\mathbb{E}[h_i]-\mathbb{E}[h_i^y]\rvert\leq \frac{h_{max}}{\sqrt{n}}$$
    $$\lvert\mathbb{E}[h_i(y)]-\mathbb{E}[h_{n+1}(y)]\rvert\leq \frac{h_{max}}{\sqrt{n}}$$
    \item \label{prop:P3} There exists a constant $L>0$ such that:
    $$\bigg\lvert\sum_{i=1}^n{(\tilde{\alpha}_i(h)-\mathbb{E}[\tilde{\alpha}_{n+1}(h)])}\bigg\lvert
    \leq L\underset{y\in \mathcal{Y}}{max}\bigg\lvert \sum_{i=1}^n{(E^i(y,h)-E^{n+1}(y,h))} \bigg\lvert$$
\end{enumerate}
Then, the leave-one-out estimator satisfies the following:
\begin{equation}
| \hat{\alpha}^{LOO}(h) - \mathbb{E}[\tilde{\alpha}_{n+1}(h)] | = O_p\left(\frac{1}{\sqrt{n}}\right)
\end{equation}}

The proof process of this theorem mostly refers to paper \cite{gauthier2025backward}. If you are interested in the proof of this theorem, we strongly recommend that you read the corresponding part first, although it is not necessary.

The Chernoff-Hoeffding Bounds in Property (\ref{prop:P1}) mean:
$$\mathbb{P}\bigg(\bigg|\frac{1}{n}\sum_{i=1}^{n}h_i-\mathbb{E}[h_i]\bigg|\geq t\bigg)\leq2exp\bigg[\frac{-2nt^{2}}{(h_{max}-h_{min})^{2}}\bigg]$$
$$\mathbb{P}\bigg(\bigg|\frac{1}{n}\sum_{i=1}^{n}h_i^y-\mathbb{E}[h_i^y]\bigg|\geq t\bigg)\leq2exp\bigg[\frac{-2nt^{2}}{(h_{max}-h_{min})^{2}}\bigg],\forall y\in\mathcal{Y}$$
$$\mathbb{P}\bigg(\bigg|\frac{1}{n}\sum_{i=1}^{n}h_i(y)-\mathbb{E}[h_i(y)]\bigg|\geq t\bigg)\leq2exp\bigg[\frac{-2nt^{2}}{(h_{max}-h_{min})^{2}}\bigg],\forall y\in\mathcal{Y}$$

Clearly, $h_1,...,h_n$ are exchangeable and hence identically distributed. The same holds for $h_1^y,...,h_n^y$, for all $y\in \mathcal{Y}$.

So we can briefly denote: 
$$\mu(h)=\mathbb{E}[h_i],\mu^y(h)=\mathbb{E}[h_i^y],\tilde{E}^i(y,h)=\frac{h^i(y)}{\mu},i=1,\cdots,n$$
$$\tilde{E}^{n+1}(y,h)=\frac{h^{n+1}(y)}{\mu^{y}};\mathbf{E}^i(h)=(E^i(y,h))_{y\in\mathcal{Y}},i=1,\cdots,n+1$$
In the proof, the norm is defined as: $||\mathbf{E}||=\underset{y\in\mathcal{Y}}{max}|E(y)|$. This definition satisfies the three axioms of the norm.
\begin{proof}
By Property(\ref{prop:P3}) and the triangle inequality, we have:
\begin{align*}
    \lvert\hat{\alpha}^{LOO}(h)-\mathbb{E}[\tilde{\alpha}_{n+1}(h)]\rvert
    &=\frac{1}{n}\left\lvert\sum_{i=1}^{n}(\tilde{\alpha}_i(h)-\mathbb{E}[\tilde{\alpha}_{n+1}(h)])\right\rvert\leq\frac{L}{n}\left\lVert\sum_{i=1}^{n}{(\mathbf{E}^i(h)-\mathbb{E}[\mathbf{E}^{n+1}(h)]}\right\rVert\\
    &= \frac{L}{n}\left\lVert \sum_{i=1}^{n}{\left[(\mathbf{E}^i(h)-\tilde{\mathbf{E}}^i(h))+(\tilde{\mathbf{E}}^i(h)-\mathbb{E}[\tilde{\mathbf{E}}^{n+1}(h)])+(\mathbb{E}[\tilde{\mathbf{E}}^{n+1}(h)]-\mathbb{E}[\mathbf{E}^{n+1}(h)])\right]} \right\rVert\\
    &\leq \frac{L}{n}\left\lVert \sum_{i=1}^{n}{(\mathbf{E}^i(h)-\tilde{\mathbf{E}}^i(h))} \right\rVert + \frac{L}{n}\left\lVert \sum_{i=1}^{n}{(\tilde{\mathbf{E}}^i(h)-\mathbb{E}[\tilde{\mathbf{E}}^{n+1}(h)])} \right\rVert \\
    &+ \frac{L}{n}\left\lVert \sum_{i=1}^{n}{(\mathbb{E}[\tilde{\mathbf{E}}^{n+1}(h)]-\mathbb{E}[\mathbf{E}^{n+1}(h)])} \right\rVert \stackrel{\text{def}}{=}T_1+T_2+T_3
\end{align*}
Consider $T_1$, through Property(\ref{prop:P1}) and $h(s;D,X)\in[h_{min},h_{max}]$, for all $y\in\mathcal{Y}$, we have
\begin{align*}
    \left\lvert E^i(y,h) - \tilde{E}^i(y,h) \right\rvert 
    &= \left\lvert \frac{nh_i(y)}{\sum_{j=1,j\neq i}^{n}{h_j}+h_i(y)} - \frac{h_i(y)}{\mu} \right\rvert \leq h_{max}\left\lvert \frac{n}{\sum_{j=1,j\neq i}^{n}{h_j}+h_i(y)} - \frac{1}{\mu} \right\rvert \\
    &= h_{max}\frac{\left\lvert \mu - \frac{1}{n}\sum_{j=1}^{n}{h_j}+\frac{h_i}{n}-\frac{h_i(y)}{n} \right\rvert}{\mu\left( \frac{1}{n}\sum_{j=1,j\neq i}^n{h_j} +\frac{h_i(y)}{n} \right)} \\
    &\leq h_{max}\frac{(h_{max}-h_{min})\sqrt{\frac{log(2/\delta)}{2n}}+\frac{2h_{max}}{n}}{\mu h_{min}} \quad \text{with probability $\geq 1 - \delta$}
\end{align*}
For all $y\in\mathcal{Y}$ have the same upper bound, then we have
$$\left\lVert \mathbf{E}^i - \tilde{\mathbf{E}}^i \right\rVert=
\underset{y\in\mathcal{Y}}{max}\left\lvert E^i(y,h) - \tilde{E}^i(y,h) \right\rvert 
\leq \frac{h_{max}(h_{max}-h_{min})\sqrt{\frac{log(2/\delta)}{2n}}+\frac{2h_{max}^2}{n}}{\mu h_{min}} \quad \text{with probability $\geq 1 - \delta$}$$
\begin{align*}
    T_1&\leq \frac{L}{n}\sum_{i=1}^{n}\left\lVert \mathbf{E}^i - \tilde{\mathbf{E}}^i \right\rVert \leq L\left\lVert \mathbf{E}^i - \tilde{\mathbf{E}}^i \right\rVert \\
    &\leq L\frac{h_{max}(h_{max}-h_{min})\sqrt{\frac{log(2/\delta)}{2n}}+\frac{2h_{max}^2}{n}}{\mu h_{min}} \quad \text{with probability $\geq 1 - \delta$}\\
    &\leq L\frac{h_{max}^2}{h_{min}^2}\left(\sqrt{\frac{log(2/\delta)}{2n}}+\frac{2}{n}\right) \quad \text{with probability $\geq 1 - \delta$}
\end{align*}
Consider $T_2$:
\begin{align*}
    \mathbb{P} \left( \frac{1}{n}\left\lVert \sum_{i=1}^{n}{(\tilde{\mathbf{E}}^i(h)-\mathbb{E}[\tilde{\mathbf{E}}^{n+1}(h)])} \right\rVert \geq t \right)
    & = \mathbb{P} \left( \bigcup_{y\in\mathcal{Y}} \left\{ \frac{1}{n} \left\lvert \sum_{i=1}^{n}{(\tilde{E}^i(y,h)-\mathbb{E}[\tilde{E}^{n+1}(y,h)])} \right\rvert \geq t \right\} \right)\\
    &\leq \sum_{y\in \mathcal{Y}}\mathbb{P}\left(  \frac{1}{n} \left\lvert \sum_{i=1}^{n}{(\tilde{E}^i(y,h)-\mathbb{E}[\tilde{E}^{n+1}(y,h)])} \right\rvert \geq t  \right)\\
    &\leq \sum_{y\in \mathcal{Y}}\mathbb{P}\left(  \frac{1}{n} \left\lvert \sum_{i=1}^{n}{\left(    \frac{h_i(y)}{\mu}-\mathbb{E}\left[ \frac{h_{n+1}(y)}{\mu^{y}} \right]\right)} \right\rvert \geq t  \right)
\end{align*}
Through the following decomposition and Property\ref{prop:P2}, there is:
$$\frac{h_i(y)}{\mu}-\mathbb{E}\left[ \frac{h_{n+1}(y)}{\mu^{y}} \right]=\frac{h_i(y)}{\mu}-\mathbb{E}\left[ \frac{h_{n+1}(y)}{\mu}\right]+\mathbb{E}\left[ \frac{h_{n+1}(y)}{\mu}\right]-\mathbb{E}\left[ \frac{h_{n+1}(y)}{\mu^{y}}\right]$$
$$\left\lvert \frac{1}{\mu}- \frac{1}{\mu^{y}} \right\rvert=\left\lvert \frac{\mu^{y}-\mu}{\mu \mu^y}  \right\rvert \leq \frac{h_{max}}{\sqrt{n}h_{min}^2}$$
\begin{align*}
    \frac{1}{n} \left\lvert \sum_{i=1}^{n}{\left(    \frac{h_i(y)}{\mu}-\mathbb{E}\left[ \frac{h_{n+1}(y)}{\mu^{y}} \right]\right)} \right\rvert 
    &=\left\lvert \frac{1}{n}\sum_{i=1}^{n}{\left(\frac{h_i(y)}{\mu}-\mathbb{E}\left[ \frac{h_{n+1}(y)}{\mu}\right]+\mathbb{E}\left[ \frac{h_{n+1}(y)}{\mu}\right]-\mathbb{E}\left[ \frac{h_{n+1}(y)}{\mu^{y}}\right]\right)} \right\rvert\\
    &=\left\lvert \frac{1}{n}\sum_{i=1}^{n}{\left(\frac{h_i(y)}{\mu}-\frac{\mathbb{E}[h_{n+1}(y)]}{\mu}\right)+\mathbb{E}[h_{n+1}(y)]\left(\frac{1}{\mu}- \frac{1}{\mu^{y}}\right)} \right\rvert\\
    &\leq \left\lvert \frac{1}{n}\sum_{i=1}^{n}{\left(\frac{h_i(y)}{\mu}-\frac{\mathbb{E}[h_{n+1}(y)]}{\mu}\right)} \right\rvert +\frac{h_{max}^2}{\sqrt{n}h_{min}^2}\\
    &\leq \left\lvert \frac{1}{n}\sum_{i=1}^{n}{\left(\frac{h_i(y)}{\mu}-\frac{\mathbb{E}[h_{i}(y)]}{\mu}\right)} \right\rvert + \left\lvert \frac{1}{n}\sum_{i=1}^{n}{\left(\frac{\mathbb{E}[h_{i}(y)]}{\mu}-\frac{\mathbb{E}[h_{n+1}(y)]}{\mu}\right)} \right\rvert + \frac{h_{max}^2}{\sqrt{n}h_{min}^2}\\
    &\leq \left\lvert \frac{1}{n}\sum_{i=1}^{n}{\left(\frac{h_i(y)}{\mu}-\frac{\mathbb{E}[h_{i}(y)]}{\mu}\right)} \right\rvert + \frac{h_{max}}{\sqrt{n}\mu} +\frac{h_{max}^2}{\sqrt{n}h_{min}^2}
\end{align*}
Return to the derivation of $T_2$, through Property(\ref{prop:P1}):
\begin{align*}
    \mathbb{P}\left(  \frac{1}{n} \left\lvert \sum_{i=1}^{n}{\left(    \frac{h_i(y)}{\mu}-\mathbb{E}\left[ \frac{h_{n+1}(y)}{\mu^{y}} \right]\right)} \right\rvert \geq t  \right)
    &\leq \mathbb{P}\left( \left\lvert \frac{1}{n}\sum_{i=1}^{n}{\left(\frac{h_i(y)}{\mu}-\frac{\mathbb{E}[h_{i}(y)]}{\mu}\right)} \right\rvert +\frac{h_{max}}{\sqrt{n}\mu}+\frac{h_{max}^2}{\sqrt{n}h_{min}^2} \geq t \right)\\
    &\leq exp\left[ -\frac{2n\mu^2\left( t- \frac{h_{max}}{\sqrt{n}\mu}-\frac{h_{max}^2}{\sqrt{n}h_{min}^2} \right)^2}{\left( h_{max}-h_{min} \right)^2} \right]
\end{align*}
$$\sum_{y\in \mathcal{Y}}\mathbb{P}\left(  \frac{1}{n} \left\lvert \sum_{i=1}^{n}{\left(    \frac{h_i(y)}{\mu}-\mathbb{E}\left[ \frac{h_{n+1}(y)}{\mu^{y}} \right]\right)} \right\rvert \geq t  \right)\leq\lvert \mathcal{Y} \rvert exp\left[ -\frac{2n\mu^2\left( t- \frac{h_{max}}{\sqrt{n}\mu}-\frac{h_{max}^2}{\sqrt{n}h_{min}^2} \right)^2}{\left( h_{max}-h_{min} \right)^2} \right] $$
Let the right end of the inequality be $\delta$, and inverse solution $t$:
\begin{align*}
    T_2&=\frac{L}{n}\left\lVert \sum_{i=1}^{n}{(\tilde{\mathbf{E}}^i(h)-\mathbb{E}[\tilde{\mathbf{E}}^{n+1}(h)])} \right\rVert\\
    &\leq L\left( \frac{h_{max}-h_{min}}{\mu}\sqrt{\frac{log(2\lvert\mathcal{Y}\rvert/\delta)}{2n}}  + \frac{h_{max}}{\sqrt{n}\mu} +\frac{h_{max}^2}{\sqrt{n}h_{min}^2}\right)  \quad \text{with probability $\geq 1 - \delta$}   \\
    &\leq L\frac{h_{max}^2}{h_{min}^2}\left(\sqrt{\frac{log(2\lvert\mathcal{Y}\rvert/\delta)}{2n}}+\frac{2}{\sqrt{n}}\right) \quad \text{with probability $\geq 1 - \delta$}
\end{align*}
Consider $T_3$, for all $y\in \mathcal{Y}$:
\begin{align*}
    \left\lvert \tilde{E}^{n+1}(y,h)-E^{n+1}(y,h) \right\rvert
    &=\left\lvert \frac{h_{n+1}(y)}{\mu^{y}}-
    \frac{(n+1)h_{n+1}(y)}{\sum_{i=1}^nh_i^y+h_{n+1}(y)}\right\rvert  \leq h_{max}\frac{\left\lvert\mu^y-\frac{1}{n+1}\sum_{i=1}^nh_i^y-\frac{1}{n+1}h_{n+1}(y)\right\rvert}{\mu^y\left(\frac{1}{n+1}\left(\sum_{i=1}^nh_i^y+h_{n+1}(y) \right) \right)}\\
    &\leq h_{max} \frac{\left\lvert\mu^y-\frac{1}{n+1}\sum_{i=1}^nh_i^y\right\rvert+\frac{h_{max}}{n+1}}{h_{min}\mu^y}
    \leq h_{max} \frac{\frac{n}{n+1}\left\lvert\mu^y-\frac{1}{n}\sum_{i=1}^nh_i^y\right\rvert+\frac{nh_{max}}{(n+1)^2}+\frac{h_{max}}{n+1}}{h_{min}\mu^y}\\
    &\leq h_{max}\frac{\frac{n(h_{max}-h_{min})}{n+1}\sqrt{\frac{log(2/\delta)}{2n}}+\frac{nh_{max}}{(n+1)^2}+\frac{h_{max}}{n+1}}{h_{min}\mu^y} \quad \text{with probability $\geq 1 - \delta$}\\
    &\leq \frac{h_{max}^2}{h_{min}^2}\left(\sqrt{\frac{log(2/\delta)}{2n}}+\frac{2}{n}\right) \quad \text{with probability $\geq 1 - \delta$}
\end{align*}
This is the upper bound for all $y\in \mathcal{Y}$, and thus it is the same for the upper bound of the norm. Let the right end of the inequality be $t$, and inverse solution $\delta$:
$$\mathbb{P}\left(\left\lVert \tilde{\mathbf{E}}^{n+1}(h)-\mathbf{E}^{n+1}(h) \right\rVert\geq t\right)\leq
2exp\left[-2n\left(\frac{h_{min}^2}{h_{max}^2}t-\frac{2}{n}\right)^2\right]$$
\begin{align*}
    \mathbb{E}\left[\left\lvert\tilde{\mathbf{E}}^{n+1}(h)-\mathbf{E}^{n+1}(h)\right\rvert\right]
    &=\int_{0}^{\infty}\mathbb{P}\left(\left\lVert \tilde{\mathbf{E}}^{n+1}(h)-\mathbf{E}^{n+1}(h) \right\rVert\geq t\right)dt\leq \int_0^{\infty}2exp\left[-2n\left(\frac{h_{min}^2}{h_{max}^2}t-\frac{2}{n}\right)^2\right]dt\\
    &\leq\int_{-\frac{\sqrt{8}}{\sqrt{n}}}^{\infty}\frac{\sqrt{2}h_{max}^2}{\sqrt{n}h_{min}^2}exp\left[-s^2\right]ds\leq \frac{\sqrt{2}h_{max}^2}{\sqrt{n}h_{min}^2}\int_{-\infty}^{\infty}exp\left[-s^2\right]ds\\
    &=\frac{\sqrt{2\pi}h_{max}^2}{\sqrt{n}h_{min}^2}
\end{align*}
Return to the derivation of $T_3$:
\begin{align*}
    T_3
    &=\frac{L}{n}\left\lVert \sum_{i=1}^{n}{(\mathbb{E}[\tilde{\mathbf{E}}^{n+1}(h)]-\mathbb{E}[\mathbf{E}^{n+1}])} \right\rVert=L\left\lVert \mathbb{E}[\tilde{\mathbf{E}}^{n+1}(h)]-\mathbb{E}[\mathbf{E}^{n+1}(h)] \right\rVert\\
    &\leq L\mathbb{E}\left[ \left\lVert \tilde{\mathbf{E}}^{n+1}(h)-\mathbf{E}^{n+1}(h) \right\rVert\right] \leq \frac{\sqrt{2\pi}Lh_{max}^2}{\sqrt{n}h_{min}^2}
\end{align*}
Finally, we have:
\begin{align*}
    \lvert\hat{\alpha}^{LOO}(h)-\mathbb{E}[\tilde{\alpha}_{n+1}(h)]\rvert\leq L\frac{h_{max}^2}{h_{min}^2}\left(\sqrt{\frac{log(2/\delta)}{2n}}+\sqrt{\frac{log(2\lvert\mathcal{Y}\rvert/\delta)}{2n}}+\sqrt{\frac{2\pi}{n}}+\frac{4}{n}\right) \quad \text{with probability $\geq 1 - 2\delta$}
\end{align*}
It means that:
$$\lvert\hat{\alpha}^{LOO}(h)-\mathbb{E}[\tilde{\alpha}_{n+1}(h)]\rvert=O_{P}(\frac{1}{\sqrt{n}})$$
\end{proof}
\section{Proof of Theorem\ref{preserve_prediction_set}}
\label{proof_T2}
\textbf{Theorem\ref{preserve_prediction_set}.} \textit{Assume that the transformation $\forall h(s;D,X)\in \mathcal{H}$ further satisfies the following property:
    \begin{enumerate}[label=(P\arabic*), ref=P\arabic*]
    \setcounter{enumi}{3}
    \item \label{prop:P4} For all $y\in \mathcal{Y}$ have:
    $$\underset{n\rightarrow\infty}{lim}\left(\frac{\sum_{i=1}^n{h_i}}{n}-\frac{\sum_{i=1}^n{h_i^y}}{n}\right)=0$$
\end{enumerate}
Then, $\forall h^1,h^2 \in\mathcal{H}$, when $n$ is sufficiently large, we have:
\begin{align}
    C_{h^1}^{\tilde{\alpha}_{n+1}(h^1)}(X_{n+1})=C_{h^2}^{\tilde{\alpha}_{n+1}(h^2)}(X_{n+1})
\end{align}}

Property (P4) implies that adding a single pseudo-test point has an asymptotically negligible effect on the average transformed calibration scores. Since the transformation depends only on the statistics of the input dataset $D$ and feature $X$, the perturbation induced by a single additional point in $D$ vanishes as the calibration set size becomes sufficiently large.

Moreover, if $h_i = h_i^y$ holds, then Property (P4) is automatically satisfied, and the theorem no longer requires any assumption on the calibration set size. Beyond the setting described in Remark 3.3, the condition $h_i = h_i^y$ may also arise in other scenarios. For example, it holds when the transformation depends only on highly localized statistics (e.g., nearest neighbors within a fixed radius) and the pseudo-test point does not belong to the local neighborhood of any calibration sample. In addition, if the underlying statistics are sufficiently robust (e.g., quantiles) such that the inclusion of a pseudo-test point does not alter these statistics, then $h_i = h_i^y$ also remains valid.
\begin{proof}
Through Property(\ref{prop:P4}), $\forall \epsilon>0$, $\exists N\in\mathbb{N}^{+}$, $\forall n\geq N$, we have:
$$\frac{\sum_{i=1}^nh_i}{n}-\epsilon<\frac{\sum_{i=1}^nh_i^y}{n}<\frac{\sum_{i=1}^nh_i}{n}+\epsilon$$
We take $\epsilon$ as sufficiently small
\begin{align*}
    \forall h\in \mathcal{H},C_h^{\alpha}(X_{n+1})
    &=\left\{y:E^{n+1}(y,h)<\frac{1}{\alpha}\right\}=\left\{y:h_{n+1}(y)<\frac{1}{(n+1)\alpha-1}\sum_{i=1}^n{h_i^y}\right\}\\
    &=\left\{y:S(X_{n+1},y)<h^{-1}\left(\frac{1}{(1/n+1)\alpha-1/n}\frac{\sum_{i=1}^n{h_i^y}}{n};D_{n+1},X_{n+1}\right)\right\}\\
    &=\left\{y:S(X_{n+1},y)<h^{-1}\left(\frac{1}{(1/n+1)\alpha-1/n}\frac{\sum_{i=1}^n{h_i}}{n};D_{n+1},X_{n+1}\right)\right\}
\end{align*}
Similarly:
$$\forall h\in \mathcal{H},C_h^{\alpha}(X_{i})=\left\{y:S(X_{i},y)<h^{-1}\left(\frac{1}{n\alpha-1}\sum_{j=1,j\neq i}^n{h_j};D_{i},X_{i}\right)\right\} \quad i=1,\cdots,n$$
Define 
$$w(D,X)=\sup\{l>0:|\{y:S(X,y)<l\}|\leq\mathcal{T}(D,X)\}$$
Which converts the size constraint of the prediction set into a score constraint. Then:
\begin{align*}
    \lvert C_h^{\alpha}(X_{n+1})\rvert \leq \mathcal{T}(D_{n+1},X_{n+1})
    &\Leftrightarrow h^{-1}\left(\frac{1}{(n+1)\alpha-1}\sum_{i=1}^n{h_i};D_{n+1},X_{n+1}\right) \leq w(D_{n+1},X_{n+1})\\
    &\Leftrightarrow \frac{1}{(n+1)\alpha-1}\sum_{i=1}^n{h_i}\leq h(w(D_{n+1},X_{n+1});D_{n+1},X_{n+1})\\
    &\Leftrightarrow \alpha \geq \frac{1}{n+1}\left[\frac{\sum_{i=1}^n{h_i}}{h\left(w(D_{n+1},X_{n+1})\right)}+1\right]
\end{align*}
$$\lvert C_h^{\alpha}(X_i)\rvert \leq \mathcal{T}(D_{i},X_{i})
    \Leftrightarrow \alpha \geq \frac{1}{n}\left[\frac{\sum_{j=1,j\neq i}h_j}{h\left(w(D_i,X_i)\right)}+1\right], \quad i=1,\cdots,n$$
From the definition of $\tilde{\alpha}(h)$:
\begin{align*}
    &\tilde{\alpha}_{n+1}(h)=inf\{\alpha\in{(0,1):\lvert C_h^{\alpha}(X_{n+1})\rvert}\leq\mathcal{T}(D_{n+1},X_{n+1})\}=\frac{1}{n+1}\left[\frac{\sum_{i=1}^n{h_i}}{h\left(w(D_{n+1},X_{n+1})\right)}+1\right]\\
    &\tilde{\alpha}_{i}(h)=inf\{\alpha\in{(0,1):\lvert C_h^{\alpha}(X_{i})\rvert}\leq\mathcal{T}(D_{i},X_{i})\}=\frac{1}{n}\left[\frac{\sum_{j=1,j\neq i}h_j}{h\left(w(D_i,X_i)\right)}+1\right], \quad i=1,\cdots,n
\end{align*}
This is precisely the result of the Corollary(\ref{simplify_a}). Substitute $\tilde{\alpha}_{n+1}(h)$ into the calculation formula of the prediction set $C_{h}^{\tilde{\alpha}_{n+1}(h)}(X_{n+1})$.
$$C_{h}^{\tilde{\alpha}_{n+1}(h)}(X_{n+1})=\{y:h_{n+1}(y)<h(w(D_{n+1},X_{n+1});D_{n+1},X_{n+1})\}=\{y:S(X_{n+1},y)<w(D_{n+1},X_{n+1})\}$$
The prediction set is independent of $h$. So:
$$C_{h^1}^{\tilde{\alpha}_{n+1}(h^1)}(X_{n+1})=C_{h^2}^{\tilde{\alpha}_{n+1}(h^2)}(X_{n+1}) \quad \forall h^1,h^2 \in\mathcal{H}$$
\end{proof}
\section{Proof of Theorem\ref{best_h} and }
\label{proof_T3}
\textbf{Theorem\ref{best_h}.} \textit{Consider the simplified setting where:
    \begin{enumerate}[label=(P\arabic*), ref=P\arabic*]
    \setcounter{enumi}{4}
    \item \label{prop:P5} 
    $$\mathbb{E}\left[\frac{h_i}{h(w_{n+1};D_{n+1},X_{n+1})}\right]
    =\mathbb{E}[h_i]\mathbb{E}\left[\frac{1}{h(w_{n+1};D_{n+1},X_{n+1})}\right]$$
    \item \label{prob:P6}
    $$\exists c_0>0,h(w_{n+1};D_{n+1},X_{n+1})>c_0\quad a.s$$
\end{enumerate}
Then we have:
$$\frac{a\mathbb{I}(s\geq w(D,X))}{\sqrt{\mathbb{P}(S(X,Y)\geq w(D,X)|D,X)}}
    =\underset{h\in\bar{\mathcal{H}}}{argmin}:\mathbb{E}\left[\frac{h_i}{h(w_{n+1};D_{n+1},X_{n+1})}\right]$$
where $a$ is an arbitrary positive constant.}
\begin{proof}
$\forall h(s;D,X)\in \bar{\mathcal{H}}$, define $h^{*}(s;D,X)=\mathbb{I}(s\geq w(D,X))h(s;D,X)$. Clearly, $h^{*}(s;D,X) \in \bar{\mathcal{H}}$
\begin{align*}
    &\mathbb{E}[h_i]    =\mathbb{E}\left[h\left(S(X_i,Y_i);D_i,X_i\right)\right]=\mathbb{E}\left[\mathbb{E}\left[h\left(S(X_i,Y_i);D_i,X_i\right)\lvert D_i,X_i\right]\right]\\
    &=\mathbb{E}\left[\mathbb{E}\left[\sum_{S(X_i,y)\geq w_i}h(S(X_i,y);D_i,X_i)\mathbb{P}_{Y\lvert X_i}(Y=y)+\sum_{S(X_i,y)<w_i}h(S(X_i,y);D_i,X_i))\mathbb{P}_{Y\lvert X_i}(Y=y)\bigg\lvert D_i,X_i\right]\right]\\
    &\geq \mathbb{E}\left[\mathbb{E}\left[\sum_{S(X_i,y)\geq w_i}h(S(X_i,y);D_i,X_i)\mathbb{P}_{Y\lvert X_i}(Y=y)\bigg\lvert D_i,X_i\right]\right]\\
    &=\mathbb{E}\left[\mathbb{E}\left[\sum_{S(X_i,y)\geq w_i}h^*(S(X_i,y);D_i,X_i)\mathbb{P}_{Y\lvert X_i}(Y=y)+\sum_{S(X_i,y)<w_i}h^*(S(X_i,y);D_i,X_i)\mathbb{P}_{Y\lvert X_i}(Y=y)\bigg\lvert D_i,X_i\right]\right]\\
    &=\mathbb{E}[h_i^*] 
\end{align*}
\begin{align*}
    \mathbb{E}\left[\frac{1}{h(w_{n+1};D_{n+1},X_{n+1})}\right]
    =\mathbb{E}\left[\frac{\mathbb{I}(w_{n+1}\geq w_{n+1})}{h(w_{n+1};D_{n+1},X_{n+1})}\right]
    =\mathbb{E}\left[\frac{1}{h^*(w_{n+1};D_{n+1},X_{n+1})}\right]
\end{align*}
Then:
$$\mathbb{E}[h_i]\mathbb{E}\left[\frac{1}{h(w_{n+1};D_{n+1},X_{n+1})}\right]\geq\mathbb{E}[h_i^{*}]\mathbb{E}\left[\frac{1}{h^{*}(w_{n+1};D_{n+1},X_{n+1})}\right]$$
So we only need to focus on the part where $s\geq w(D,X)$. 

Define $c(D,x)=h^*(w(D,X);D,X)$, and $\left\lvert c(D,x) \right\rvert\geq c_0>0$ by Property(\ref{prob:P6}). Let $c_i=c(D_i,X_i)\quad i=1,\cdots,n+1$
\begin{align*}
    &\mathbb{E}[h_i^*] \mathbb{E}\left[\frac{1}{h^*(w_{n+1};D_{n+1},X_{n+1})}\right]\\
    &=\mathbb{E}\left[\mathbb{E}\left[\sum_{S(X_i,y)\geq w_i}h(S(X_i,y);D_i,X_i)\mathbb{P}_{Y\lvert X_i}(Y=y)\bigg\lvert D_i,X_i\right]\right]\mathbb{E}\left[\frac{1}{c_{n+1}}\right]\\
    &\geq \mathbb{E}\left[\mathbb{E}\left[\sum_{S(X_i,y)\geq w_i}c_i\mathbb{P}_{Y\lvert X_i}(Y=y)\bigg\lvert D_i,X_i\right]\right]\mathbb{E}\left[\frac{1}{c_{n+1}}\right]
\end{align*}
The equality sign holds if and only if: $h(s;D,X)=c(D,X)$, when $s\geq w(D,X)$. 

So we only need to consider the optimal $c(D,X)$. Define $h^{c}(s;D,X)=\mathbb{I}(s\geq w(D,X))c(D,X)$ and
$$p(D,X)=\sum_{S(X,y)\geq w(D,X)}\mathbb{P}_{Y\lvert X}(Y=y),\quad p_i=p(D_i,X_i),i=1,\cdots,n+1$$
Then:
$$\mathbb{E}[h_i^c]=\mathbb{E}\left[\mathbb{E}\left[c_ip_i\bigg\lvert D_i,X_i\right]\right]=\int_{\mathcal{D}\times\mathcal{X}}c(D,X)p(D,X)d\mathcal{P}(D_i,X_i)\overset{\text{def}}{=}A(c)$$

$$\mathbb{E}\left[\frac{1}{h^c(w_{n+1};D_{n+1},X_{n+1})}\right]=\mathbb{E}\left[\frac{1}{c_{n+1}}\right]=\int_{\mathcal{D}\times\mathcal{X}}\frac{1}{c(D,X)}d\mathcal{P}(D_{n+1},X_{n+1})\overset{\text{def}}{=}B(c)$$
Consider a small perturbation of $c(D,X)$ given by $c_\epsilon(D,X)=c(D,X)+\epsilon\,\eta(D,X),$ where $\eta(D,X)$ is a fixed bounded function satisfying $|\eta(D,X)|\leq M$ and $\epsilon>0$ is sufficiently small with $\epsilon<\frac{c_0}{2M}$. Then the corresponding variations are:
$$\delta A(c)=A(c_\epsilon)-A(c)=\int_{\mathcal{D}\times\mathcal{X}}\epsilon \eta(D,X) p(D,X)d\mathcal{P}(D_i,X_i)$$
$$\delta B(c)=B(c_\epsilon)-B(c)=\int_{\mathcal{D}\times\mathcal{X}}\frac{1}{c(D,X)+\epsilon\eta(D,X)}-\frac{1}{c(D,X)}d\mathcal{P}(D_{n+1},X_{n+1})$$
Next, we will derive the linear principal part of the variation in $\delta B(c)$
$$\frac{1}{1+u}=\sum_{k=0}^{\infty}(-1)^{k}u^k\xRightarrow{0<\epsilon<\frac{c_0}{2M}}\frac{1}{1+\epsilon\frac{\eta(D,X)}{c(D,X)}}=\sum_{k=0}^{\infty}(-1)^{k}\left(\epsilon\frac{\eta(D,X)}{c(D,X)}\right)^k$$
$$\frac{1}{c_{\epsilon}(D,X)}-\frac{1}{c(D,X)}=\frac{1}{c(D,X)}\left(\frac{1}{1+\epsilon\frac{\eta(D,X)}{c(D,X)}}-1\right)=-\frac{\epsilon\eta(D,X)}{c(D,X)^2}+\sum_{k=2}^{\infty}(-1)^k\frac{\eta(D,X)^k}{c(D,X)^{k+1}}\epsilon^k$$
$$ \left\lvert\sum_{k=2}^{\infty}(-1)^k\frac{\eta(D,X)^k}{c(D,X)^{k+1}}\epsilon^k\right\rvert\leq \sum_{k=2}^{\infty}\frac{M^k\epsilon^k}{c_0^{k+1}}=\frac{M^2}{c_0^2(c_0-M\epsilon)}\epsilon^2$$
This is a consistent upper bound:
\begin{align*}
    \left\lvert\int_{\mathcal{D}\times\mathcal{X}}\sum_{k=2}^{\infty}(-1)^k\frac{\eta(D,X)^k}{c(D,X))^{k+1}}\epsilon^kd\mathcal{P}(D_{n+1},X_{n+1})\right\rvert
    &\leq\int_{\mathcal{D}\times\mathcal{X}}\left\lvert\sum_{k=2}^{\infty}(-1)^k\frac{\eta(D,X)^k}{c(D,X)^{k+1}}\epsilon^k\right\rvert d\mathcal{P}(D_{n+1},X_{n+1})\\
    &\leq\int_{\mathcal{D}\times\mathcal{X}}\frac{M^2}{c_0^2(c_0-M\epsilon)}\epsilon^2 d\mathcal{P}(D_{n+1},X_{n+1})\\
    &= \frac{M^2}{c_0^2(c_0-M\epsilon)}\epsilon^2\int_{\mathcal{D}\times\mathcal{X}} d\mathcal{P}(D_{n+1},X_{n+1})\\
    &=\frac{M^2}{c_0^2(c_0-M\epsilon)}\epsilon^2=O(\epsilon^2)
\end{align*}
Then:
$$\delta B(c)=\int_{\mathcal{D}\times\mathcal{X}}-\frac{\epsilon\eta(D,X)}{c(D,X)^2}d\mathcal{P}(D_{n+1},X_{n+1})+O(\epsilon^2)$$
$O(\epsilon^2)$ can be ignored in the first-order variation. We define $J(c)=A(c)B(c)$, its variation is:
$$\delta J(c)=A(c)\delta B(c)+B(c)\delta A(c)=\epsilon\int_{\mathcal{D}\times\mathcal{X}}B(c)p(D,X)-\frac{A(c)}{c(D,X)^2}d\mathcal{P}(D_{n+1},X_{n+1})$$
$$\forall \lvert\eta(D,X)\rvert<M,J^{'}=\frac{\delta J}{\epsilon}=0\Rightarrow B(c)p(D,X)-\frac{A(c)}{c(D,X)^2}=0\Rightarrow c(D,X)=\sqrt{\frac{A(c)}{B(c)p(D,X)}}$$
This is the first-order condition. Under this condition, continue to search for $c(D,X)$. Pass $c(D,X)$ back to $J(c)$:
$$J(c)=A(c)B(c)=c(D,X)\mathbb{E}\left[\sqrt{p(D,X)}\right]\frac{1}{c(D,X)}\mathbb{E}\left[\sqrt{p(D,X)}\right]=\left(\mathbb{E}\left[\sqrt{p(D,X)}\right]\right)^2$$
At this point, $J(c)$ is independent of $c(D,X)$. It means that $\sqrt{\frac{A(c)}{B(c)}}$ can be an arbitrary positive constant.
$$c(D,X)=a\sqrt{\frac{1}{p(D,X)}}=argmin:J(c)$$
$$h(s;D,X)=\frac{a\mathbb{I}(s>w(D,X))}{\sqrt{p(D,X)}}=\underset{h\in \bar{H}}{argmin}:\mathbb{E}[h_i]\mathbb{E}\left[\frac{1}{h(w_{n+1};D_{n+1},X_{n+1})}\right]$$
$a$ is an arbitrary positive constant. By Property(\ref{prop:P5}), we thus proved the theorem.
\end{proof}

\section{Approximation of Strictly Monotonic Functions}
\label{a_smf}
\begin{proposition}
\label{app_a_h}
Suppose $0<w_m\leq w(D,X)\leq w_M$, then
$\forall w_m/2>\epsilon>0$, exists $\mathbb{I}^{\epsilon}_w\in\mathcal{H}$, such that:
$$||\mathbb{I}_w-\mathbb{I}^{\epsilon}_w||\leq\epsilon$$
$$\left\lvert\hat{\alpha}^{LOO}({\mathbb{I}}^{\epsilon}_w)-\hat{\alpha}_{pe}^{LOO}(\mathbb{I}_w)\right\rvert\leq \frac{2(w_{m}+(n-1)w_M)\epsilon}{nw_{m}^2}$$
$\hat{\alpha}_{pe}^{LOO}$ is calculated based on Corollary \ref{simplify_a}
\end{proposition}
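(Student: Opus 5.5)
We will construct $\mathbb{I}^{\epsilon}_w$ explicitly as a strictly increasing perturbation of the step function. Then we compare the closed-form LOO estimators from Corollary \ref{simplify_a}, which are the ones defining $\hat{\alpha}_{pe}^{LOO}$.

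\textbf{Construction.} A natural choice is
\[
\mathbb{I}^{\epsilon}_w(s;D,X)=w(D,X)\mathbb{I}(s\geq w(D,X))+\epsilon\,\frac{s}{1+s}.
\]
For fixed $(D,X)$ this is a step function plus a strictly increasing function, so it is strictly increasing in $s$ and lies in $\mathcal{H}$. It is also nonnegative. The perturbation satisfies $0\le \epsilon s/(1+s)<\epsilon$, so $\|\mathbb{I}_w-\mathbb{I}^{\epsilon}_w\|_\infty\le\epsilon$ uniformly in $(s,D,X)$, which gives the first claim.

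\textbf{Closed-form values.} The hypothesis $\epsilon<w_m/2$ guarantees $\mathbb{I}^{\epsilon}_w(w_i;D_i,X_i)\in[w_i,w_i+\epsilon]\subset[w_m,\tfrac32 w_M]$, so all denominators are bounded below by $w_m$. Write $h_j=w_j\mathbb{I}(S_j\ge w_j)\in[0,w_M]$ and $h_j^\epsilon=h_j+\epsilon_j$ with $0\le\epsilon_j\le\epsilon$. By Corollary \ref{simplify_a},
\[
\tilde\alpha_i(\mathbb{I}_w)=\frac1n\Bigl(\frac{\sum_{j\ne i}h_j}{w_i}+1\Bigr),\qquad
\tilde\alpha_i(\mathbb{I}^{\epsilon}_w)=\frac1n\Bigl(\frac{\sum_{j\ne i}h_j^\epsilon}{w_i+\epsilon_i'}+1\Bigr),
\]
with $0\le\epsilon_i'\le\epsilon$.

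\textbf{Bounding the difference.} For $A=\sum_{j\ne i}h_j$, $A'=\sum_{j\ne i}h_j^\epsilon$, $B=w_i$ and $B'=w_i+\epsilon_i'$, we use
\[
\Bigl|\frac{A'}{B'}-\frac AB\Bigr|\le \frac{|A'-A|}{B'}+\frac{A|B-B'|}{BB'}.
\]
The bounds $|A'-A|\le (n-1)\epsilon$, $A\le (n-1)w_M$, $B,B'\ge w_m$ and $|B-B'|\le\epsilon$ give
\[
\Bigl|\frac{A'}{B'}-\frac AB\Bigr|\le \frac{(n-1)\epsilon}{w_m}+\frac{(n-1)w_M\epsilon}{w_m^2}.
\]
Averaging over $i$ and dividing by $n$ yields a bound of the form $\frac{(n-1)(w_m+w_M)\epsilon}{n w_m^2}$. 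This is at most the stated $\frac{2(w_m+(n-1)w_M)\epsilon}{nw_m^2}$ whenever $w_m\le w_M$, possibly after absorbing constants; the factor $2$ gives slack for a cruder estimate such as $B'\ge w_m/2$.

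\textbf{Main obstacle.} The delicate point is that Corollary \ref{simplify_a} was derived for $h\in\mathcal{H}$, which includes $\mathbb{I}^\epsilon_w$, whereas $\mathbb{I}_w$ is only nondecreasing. This is why the statement compares against $\hat\alpha_{pe}^{LOO}$, the value defined via the closed form rather than via the infimum. I would state this explicitly and check that the inverse-function step in the proof of Theorem \ref{preserve_prediction_set} applies to $\mathbb{I}^\epsilon_w$. That step needs $\mathbb{I}^\epsilon_w(\cdot;D,X)$ to be strictly increasing, which holds; continuity is not needed because the sets are defined by strict inequalities on scores.
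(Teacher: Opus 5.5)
Your proof is correct and has the same skeleton as the paper's: an explicit strictly increasing $\epsilon$-perturbation of the step, then the closed forms of Corollary~\ref{simplify_a}, then a ratio-difference estimate. The perturbation itself is different, though, and the difference matters.

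The paper adds $\Delta^{\epsilon}(s;w)=\mathbb{I}(s<w)\,\epsilon s/w-\mathbb{I}(s\geq w)\,\epsilon/(s-w+1)$, which pulls the value at the threshold down to $w-\epsilon$. It needs $\epsilon<w_m/2$ for two reasons. One is strict monotonicity across the jump (left limit $\epsilon$ versus value $w-\epsilon$). The other is keeping the denominators $w_i-\epsilon\geq w_m/2$, which is where its factor $2$ comes from.

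Your perturbation $\epsilon s/(1+s)$ is nonnegative, so $\mathbb{I}^{\epsilon}_w(w_i;D_i,X_i)\geq w_i\geq w_m$ for every $\epsilon>0$. The hypothesis $\epsilon<w_m/2$ is therefore superfluous for you. You credit it with the containment in $[w_i,w_i+\epsilon]$, but that containment holds regardless.

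Your estimate $\frac{(n-1)(w_m+w_M)\epsilon}{nw_m^2}$ implies the stated bound without any ``absorbing constants''. The comparison is exactly equivalent to $(n-3)w_m\leq(n-1)w_M$, so state that directly.

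This is a real advantage of your route. With the paper's perturbation, the same two-term triangle estimate gives $\frac{2(n-1)(w_m+w_M)\epsilon}{nw_m^2}$, which is larger than the stated constant once $n\geq 3$. The paper's displayed simplification drops the factor in front of the $w_m$ term. The stated bound does still hold for the paper's construction, but only by using a sign structure. Specifically, $\Delta^{\epsilon}\geq 0$ exactly where $h_j=0$ and $\Delta^{\epsilon}\leq 0$ exactly where $h_j=w_j$, so for each $j$ at most one of the two error contributions is positive. Your construction makes the stated constant follow directly from the triangle inequality.

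Two small clean-ups. First, the $1/n$ in your final bound comes from the definition of $\tilde{\alpha}_i$; averaging over $i$ only preserves the per-$i$ bound and adds no further factor. Second, your point that only strict monotonicity, not continuity, is needed to invoke Corollary~\ref{simplify_a} for $\mathbb{I}^{\epsilon}_w$ is correct in the classification setting. There $|\{y:h(S(X_i,y))<t\}|\leq\mathcal{T}_i$ holds if and only if $t\leq h(w_i)$, with $w_i$ the $(\mathcal{T}_i+1)$-th smallest score.
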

This proposition shows that the optimization remains valid even when the constraints are expanded from $\mathcal{H}$ to $\bar{\mathcal{H}}$. For a sufficiently small $\epsilon$, the difference in the leave-one-out estimators between the strictly increasing $\mathbb{I}_{w}^{\epsilon}$ and the non-decreasing $\mathbb{I}_{w}$ is also sufficiently small. While $\mathbb{I}_{w}$ is convenient to compute, its miscoverage level cannot be determined by the original definition. Instead, the calculation must follow the expression derived under the monotonicity condition.

In this Proposition, the norm is defined as $\lVert h \rVert=\underset{s\in[0,\infty)}{sup}\lvert h(s)\rvert$
\begin{proof}
For transformation $h(s;D,X)=w(D,X)\mathbb{I}(s>w(D,X))=\mathbb{I}_w$, we fixed $D,X$. We can add a weak partial strictly monotonically increasing function to make it satisfy strict monotonicity in different parts. For $\mathbb{I}(s>w(D,X))$, we just consider two parts: $\{s:s<w(D,X)\}$ and $\{s:s\geq w(D,X)\}$. We select the function $\Delta^{\epsilon}(s;w(D,X))$: as the  weak partial strictly monotonically increasing function:
$$\Delta^{\epsilon}(s;w(D,X))=\mathbb{I}(s<w(D,X))\frac{\epsilon}{w(D,X)}s+\mathbb{I}(s\geq w(D,X))\frac{-\epsilon}{s-w(D,X)+1}$$
And define:
$$\mathbb{I}_w^{\epsilon}=\mathbb{I}_w+\Delta^{\epsilon}(s;w(D,X))=w(D,X)\mathbb{I}(s>w(D,X))+\Delta^{\epsilon}(s;w(D,X))$$
Its function graph is
\begin{figure*}[!h]
\centering
\resizebox{7cm}{!}{
    \includegraphics[width=\linewidth]{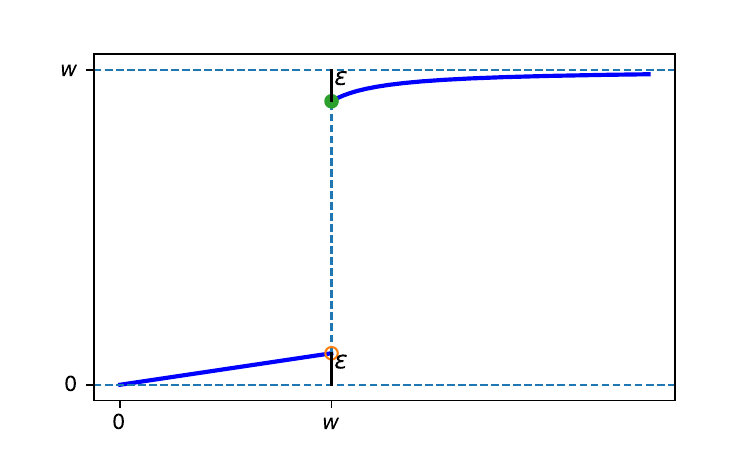}
    }
\end{figure*}

Clearly, 
$\forall s\geq0 \quad \lvert \Delta^{\epsilon}(s;w(D,X))\rvert\leq \epsilon \quad \text{and}\quad \lVert\mathbb{I}^{\epsilon}_w-\mathbb{I}^{\epsilon}\rVert=\lVert\Delta^{\epsilon}(s;w(D,X))\rVert\leq \epsilon$ 

Through Corollary(\ref{simplify_a}):
$$\tilde{\alpha}_{n+1}({\mathbb{I}}_w^{\epsilon})=\frac{1}{n+1}\left(\frac{\sum_{i=1}^n\left[w_i\mathbb{I}(S_i\geq w_i)+\Delta^{\epsilon}(S_i;w_i)\right]}{w_{n+1}-\epsilon}+1\right)$$
$$\tilde{\alpha}_{n+1}^{pe}({\mathbb{I}}_w)=\frac{1}{n+1}\left(\frac{\sum_{i=1}^nw_i\mathbb{I}(S_i\geq w_i)}{w_{n+1}}+1\right)$$
$$\tilde{\alpha}_{i}({\mathbb{I}}_w^{\epsilon})=\frac{1}{n}\left(\frac{\sum_{j=1,j\neq i}^n\left[w_j\mathbb{I}(S_j\geq w_j)+\Delta^{\epsilon}(S_j;w_j)\right]}{w_i-\epsilon}+1\right)$$
$$\tilde{\alpha}_{i}^{pe}({\mathbb{I}}_w)=\frac{1}{n}\left(\frac{\sum_{j=1,j\neq i}^nw_j\mathbb{I}(S_j\geq w_j)}{w_i}+1\right)$$
$$\hat{\alpha}_{pe}^{LOO}(h)\stackrel{\text{def}}{=}\frac{1}{n}\sum_{i=1}^n\tilde{\alpha}_{i}^{pe}(h)$$
Since the two functions are very close, intuitively, the corresponding levels of false coverage are also very close. Under $\lVert \Delta^{\epsilon}(s;w(D,X)) \rVert\leq \epsilon$  always holding for all $(D,X)\in \mathcal{D}\times\mathcal{X}$, we have:
\begin{align*}
    \left\lvert\tilde{\alpha}_{n+1}({\mathbb{I}}_w^{\epsilon})-\tilde{\alpha}_{n+1}^{pe}({\mathbb{I}}_w)\right\rvert
    &=\frac{1}{n+1}\left[\left(\frac{1}{w_{n+1}-\epsilon}-\frac{1}{w_{n+1}}\right)\sum_{i=1}^nw_i\mathbb{I}(S_i\geq w_i)+\frac{1}{w_{n+1}-\epsilon}\sum_{i=1}^n \lvert \Delta^{\epsilon}(S_i;D_i,X_i) \rvert\right]\\
    &\leq\frac{1}{n+1}\left[\left(\frac{1}{w_{n+1}-\epsilon}-\frac{1}{w_{n+1}}\right)w_Mn+\frac{n\epsilon}{w_{n+1}-\epsilon}\right]\leq\frac{2(w_{m}+nw_M)\epsilon}{(n+1)w_{m}^2}
\end{align*}
Similarly, for leave-one-out estimator and pseudo miscoverage levels:
$$\left\lvert\tilde{\alpha}_{i}({\mathbb{I}}_w^{\epsilon})-\tilde{\alpha}_{i}^{pe}(\mathbb{I}_w)\right\rvert\leq \frac{2(w_{m}+(n-1)w_M)\epsilon}{nw_{m}^2}$$
$$\left\lvert\hat{\alpha}^{LOO}({\mathbb{I}}^{\epsilon}_w)-\hat{\alpha}_{pe}^{LOO}(\mathbb{I}_w)\right\rvert\leq \frac{2(w_{m}+(n-1)w_M)\epsilon}{nw_{m}^2}$$
\end{proof}

\end{document}